\documentclass[lettersize,journal]{IEEEtran}

\usepackage{amsmath,amsfonts}
\usepackage{algorithmic}
\usepackage{algorithm}
\usepackage{array}
\usepackage[caption=false,font=normalsize,labelfont=sf,textfont=sf]{subfig}
\usepackage{textcomp}
\usepackage{stfloats}
\usepackage{url}
\usepackage{graphicx}
\usepackage{verbatim}
\usepackage{graphicx}
\usepackage{cite}

\usepackage{amssymb}
\usepackage{amsthm}
\DeclareMathOperator{\clip}{clip} 
\usepackage{booktabs,multirow}
\theoremstyle{plain}
\newtheorem{theorem}{Theorem}

\begin{document}

\title{Reducing ANN-SNN Conversion Error via Residual Membrane Potential Alignment}

\author{Zirui Chen, Zihan Huang, Tong Bu, Jianhao Ding, Yiting Dong,  Zhaofei Yu

\thanks{Z. Chen, T. Bu and Z. Yu are with the Institute for Artificial Intelligence, Peking University, and also with Beijing Key Laboratory of Brain-inspired Spiking Large Models, the School of Computer Science, Peking University, Beijing, China. E-mail:\{2200012913, putong30, yuzf12\}@pku.edu.cn }

\thanks{Z. Huang, J. Ding and Y. Dong are with the School of Computer Science, Peking University, and also with the National Key Laboratory for Multimedia Information Processing, Peking University, Beijing, China. E-mail: \{hzh, djh02998, dongyiting\}@pku.edu.cn}

}

\markboth{IEEE Transactions on Neural Networks and Learning Systems}%
{Chen \MakeLowercase{\textit{et al.}}: Reducing ANN-SNN Conversion Error via Residual Membrane Potential Alignment}


\maketitle

\begin{abstract}
Spiking Neural Networks (SNNs) serve as core architectures for neuromorphic computing thanks to event-driven operation and ultra-low power consumption. Direct SNN training is hindered by non-differentiable spikes that induce vanishing gradients and unstable optimization. ANN-SNN conversion circumvents such issues by reusing well-trained ANN weights for low-latency, energy-efficient inference. Nevertheless, existing conversion schemes suffer from severe accuracy drops at small timesteps, large inference delays and cumulative quantization errors, even with marginal performance loss at large $T$.
To address these limitations, we first analyze flaws of conventional conversion pipelines from residual membrane potential statistics and propose a novel conversion strategy combining dynamic initial potential tuning and feature enhancement. We then introduce a regularization loss $\mathcal{L}_{\mathrm{RMPD}}$ to adapt initial potential of IF neurons and mitigate systematic truncation bias from boundary aggregation. A dedicated SCR-Conv2d competitive refinement layer with grouped convolution is further built to sharpen feature discrimination, eliminate redundant spikes and stabilize encoding under tiny time windows. Integrated with the state-of-the-art QCFS baseline, our approach delivers consistent low-latency performance gains and generalizes to ReLU CNNs, ANN Transformers, and multi-threshold SNN variants. Evaluations on CIFAR-10, CIFAR-100 and ImageNet verify prominent accuracy improvements at $T=2,4,8$, with negligible extra computation overhead. This work offers an effective conversion paradigm to facilitate real-world SNN deployment on neuromorphic chips.
\end{abstract}

\begin{IEEEkeywords}
Spiking Neural Network, ANN-to-SNN Conversion, Dynamic Membrane Potential,  Spike Competitive Refinement, Lateral Inhibition.
\end{IEEEkeywords}

\section{Introduction}
\IEEEPARstart{S}{piking} Neural Networks (SNNs) are spike-driven biologically inspired neural networks\cite{maass1997networks}. Unlike conventional Artificial Neural Networks (ANNs), SNNs introduce discrete inference time steps, where each neuron accumulates membrane potential at every time \cite{bohte2000spikeprop,gerstner2014neuronal}; once the membrane potential exceeds a predefined firing threshold, the neuron emits a fixed spike\cite{roy2019towards,mueller2021spiking}. This discrete spike-based information transmission mechanism\cite{perez2021sparse,pei2019towards,yao2024spike,fang2020spike,zenke2021brain,cheng2026brain,YU2016,yu2013rapid} delivers remarkable energy efficiency gains for two core reasons: first, SNNs maintain drastically fewer active neurons than ANNs, with most irrelevant neurons retaining zero output throughout inference; second, fixed spike sequences generated by IF layers substantially reduce floating-point multiplication and division operations. For the above reasons, converting well-trained ANNs into equivalent SNNs stands as a vital research direction.

Theoretical derivations and empirical experiments verify that SNNs converted from pre-trained ANNs achieve comparable classification accuracy to their source ANNs when the inference time step $T \ge 64$ \cite{kim2020strong}. Nevertheless, enlarging the number of time steps linearly multiplies inference latency, which severely hinders the practical deployment of SNNs on neuromorphic chips. For this reason, most state-of-the-art ANN-SNN conversion studies aim to mitigate accuracy degradation under small time steps $T$\cite{hao2025faster,liu2025ecl,xia2026enhancing}, and most of these methods exclusively target quantization errors induced by single-spike firing during ANN-SNN conversion, attempting to eliminate uneven error or quantization error between ANNs and converted SNNs.

However, in practice, the input features follow continuous distributions. Therefore, when the source ANNs adopt continuous activations, it is impossible to eliminate neuron-wise errors after conversion under the spike-based structure of SNNs \cite{hao2023reducing}, and the resulting conversion errors also form smooth distributions \cite{bu2025inference,xu2026error}. When quantization-aware activations are explicitly applied to ANNs \cite{datta2025snn,hu2023fastsnn}, relevant studies generally neglect layerwise error propagation and assume identical inputs fed into successive layers. Under such assumptions, though the activations of converted SNN neurons can be theoretically aligned to eliminate quantization errors, cumulative inter-layer errors inevitably shift the input distribution during forward propagation, leading to mismatches between target ANN activations and realized spike firing rates. This critical issue has rarely been systematically investigated in existing literature. 


Taking prevailing high-fidelity ANN-SNN conversion pipelines as our research subject, we analyze the primary sources of the aforementioned two types of conversion errors and uniformly attribute them to the distribution of residual membrane potential. We further derive the optimal distribution for minimizing conversion errors under small time steps. On this basis, we propose the SRMP method, which introduces a regularization term $\mathcal{L}_{\mathrm{RMPD}}$ to regularize residual membrane potential. Meanwhile, we observe that the performance of the SRMP method is affected by the variance of input distributions. To mitigate this issue, a competitive refinement layer is incorporated to further boost the optimization performance of the SRMP method.

Our core contributions are summarized as follows:
\begin{enumerate}
    \item We conduct an in-depth statistical analysis of residual membrane potential distributions across distinct neuron subgroups
    and elaborate the statistical properties of residual membrane potential under minimal conversion error.
    \item We introduce the regularization term $\mathcal{L}_{\mathrm{RMPD}}$ to modulate the initial membrane potential of IF neurons, and apply fine-tuning with the proposed loss to shift residual membrane potential toward normal distributions. This strategy is compatible with diverse spiking neuron architectures and achieves competitive accuracy under ultra-short time steps $T=2,4,8$.
    \item We propose the SCR-Conv2d competitive refinement layer, which cooperates with SRMP to boost conversion accuracy.
\end{enumerate}

\section{Related work}

\subsection{ANN-SNN Conversion}
Mainstream solutions to optimize ANN-SNN conversion fall into four categories:
(1) Post-conversion fine-tuning: researchers fine-tune converted SNNs via temporal loss functions based on Backpropagation Through Time (BPTT) to adapt network temporal dynamics and narrow the accuracy gap relative to original ANNs \cite{xia2026enhancing,rathi2020enabling,zheng2021going,wu2023tandem}. These methods can exceed the accuracy of the original, but complete  training time via backpropagation is required.
(2) Error-aware neuron pruning: researchers analyze and categorize error-prone neurons arising during ANN-SNN transformation, then conduct targeted weight pruning or neuron pruning on such units \cite{datta2022adversarially,liu2025ecl}.
(3) Spiking neuron parameters optimization: prior work \cite{bu2022optimized,shen2024rethinking} proves that initial membrane potential set to $0.5\theta$ minimizes average conversion error. Researchers further adjust parameters such as firing thresholds or firing mechanisms according to temporal properties of SNNs\cite{sengupta2019going,zhang2024low,wang2026kirin}, which mitigate conversion errors via neuron‑level spiking dynamics. (4) Source ANN architecture modification: 
these representative works\cite{ding2021optimal,han2023symmetric,jiang2023unified,wang2023new,yang2025csqcfs,huang2025residual,bai2026qb,cao2026towards,liu2025ecl,hu2023spiking}, including the QCFS  \cite{bu2023optimal},mainly redesign ANN activation functions to mimic spiking neuron dynamics for high-fidelity conversion.
Other representative modifications involve pooling layers and other intermediate network layers \cite{cao2015spiking,diehl2015fast}, as well as introducing specialized neuron designs such as signed neurons \cite{wang2022signed} and negative spikes \cite{li2022efficient}, these approaches are the most straightforward, yet their upper accuracy bound cannot exceed that of ANNs.

The QCFS scheme, proposed in \cite{bu2023optimal}, is an important model for the subsequent analysis. It replaces the standard ReLU activation with a specialized activation function matching the temporal dynamics of spiking neurons, thereby minimizing the error term in Eq.~\eqref{eq:3.9}. Using the quantization clip-floor-shift (QCFS) activation, the output of ANN neurons mimics discrete spike responses, formulated as
\begin{equation}
a^{l} = \hat{h}(z^{l}) = \lambda^{l} \, \mathrm{clip}\!\left( \frac{1}{L} \left\lfloor \frac{z^{l} L}{\lambda^{l}} + \varphi \right\rfloor, 0, 1 \right).
\label{eq:QCFS-ann-al}
\end{equation}

The original work further theoretically proves that the average conversion error reaches its minimum under the setting $\lambda^l=\theta^l$ and $\varphi=0.5$, which corresponds to an initial membrane potential of $0.5\theta^l$. Nevertheless, the unevenness of conversion errors cannot be completely eliminated even with QCFS, resulting in progressive error accumulation across network layers. Let $\Delta_{l}$ denote the activation discrepancy at layer $l$. The error propagation from layer $l$ to layer $l+1$ satisfies
\begin{equation}
\Delta_{l+1} = W^{l+1} \Delta_{l} + W^l \phi^{l-1}(T) - \left\lfloor \frac{W^l \phi^{l-1}(T)}{\theta^l} \right\rfloor \theta^l.
\end{equation}
The ideal premise of identical input distributions between ANN and SNN is hard to guarantee in practice, which introduces systematic shifts caused by the unevenness of layer-wise conversion errors. Consequently, the neuron-wise output alignment strategy realized by QCFS within ANNs still incurs unavoidable residual errors.

\subsection{Constraints on Membrane Potential}
During real-world SNN training, especially direct training optimized via Backpropagation Through Time (BPTT)\cite{neftci2019surrogate,zhou2023spikformer,zhou2024direct,zhu2022efficient}, the membrane potential distribution of hidden neurons often deviates drastically from the theoretically predicted Gaussian distribution \cite{deweese2006non}, which creates a fundamental obstacle to training stability and final model performance. Membrane  potential is also closely related to conversion error in ANN-SNN conversion, distribution of residual potentials directly determines the magnitude of unevenness error, which persists alongside quantization and clipping errors. To alleviate this issue, existing methods such as RMP-Loss \cite{guo2023rmploss}, MPO\cite{sun2026mpo} and MPD-SGR \cite{jiang2026mpd} explicitly impose constraints on membrane potential statistics to drive the distribution toward the ideal Gaussian shape, and thus boost the adversarial robustness of SNNs. Our SRMP method also centers on residual membrane potential distribution, with a stronger focus on its impact on layer-wise accumulated conversion errors.

\subsection{Lateral Inhibition}

Lateral inhibition is a canonical neuromorphic mechanism observed widely in biological sensory circuits \cite{hartline1956inhibition}, \cite{blakemore1972lateral}. As a core competitive circuit, strongly activated neurons suppress the firing of adjacent neurons via inhibitory connection, which enhances feature contrast and shapes sparse population coding. 
Such competitive activity modulation naturally matches the membrane potential dynamics and discrete spike coding paradigm of SNNs, offering a promising way to regularize neuronal activation distribution. 
Numerous lateral inhibition schemes have been proposed for SNN training \cite{cheng2020lisnn,zhao2022spiking,kim2021inhibitory,zheng2025spiliformer}. Among them, LISNN \cite{cheng2020lisnn} serves as a representative work, which defines local inhibitory neighborhoods. Once a neuron within the neighborhood fires a spike, the inhibition module instantly transmits inhibitory signals to its neighboring neurons to suppress their membrane potentials. We design a lateral-inhibition-based SCR-Conv2d layer for feature enhancement, which is not deployed alone but strengthens the tuning performance of the SRMP method through inhibitory modulation.

\section{Preliminaries}

\subsection{Neuron Model}
Consistent with prior literature \cite{cao2015spiking,diehl2015fast,deng2021optimal}, we adopt the simplest and practically applicable IF neuron model\cite{tal1997computing}. Suppose neurons in layer $l$ receive input activations $x^{l-1}(t)$ propagated from layer $l-1$ at time step $t$. The membrane potential update rule is formulated as:
\begin{equation}
m^{l}(t)=v^{l}(t-1) + W^{l}x^{l-1}(t),
\label{eq:mt-vl-wl}
\end{equation}
where $m^{l}(t)$ represents the instantaneous membrane potential of layer-$l$ neurons, $x^{l-1}(t)$ is the output spike signal of the preceding layer $l-1$, and $W^{l}$ stands for the synaptic weight matrix of layer $l$.

Once the accumulated membrane potential $m^{l}(t)$ exceeds the predefined firing threshold $\theta^{l}$, the neuron emits a spike and its membrane potential is simultaneously reset. The spike generation and reset rules are expressed as three coupled equations:
\begin{equation}
s^{l}(t) = \mathcal{H}\big(m^{l}(t) - \theta^{l}\big),
\end{equation}
\begin{equation}
v^{l}(t) = m^{l}(t) - s^{l}(t)\theta^{l},
\label{eq:vt-ml-sl}
\end{equation}
\begin{equation}
x^{l}(t) = s^{l}(t)\theta^{l},
\label{eq:xl-sl}
\end{equation}
where $\mathcal{H}(\cdot)$ denotes the Heaviside step function, which indicates whether a neuron fires a spike at the current time step.

To precisely retain analog activation information during conversion, we adopt the soft (subtractive) reset mechanism for membrane potential update as proposed in \cite{han2020deep}. Instead of clamping the membrane potential to a fixed constant after spiking, we subtract the firing threshold directly from the overshooting membrane potential to preserve residual potential information.

\subsection{Quantization Error in ANN-SNN Conversion}
Combining Eqs. \eqref{eq:mt-vl-wl}, \eqref{eq:vt-ml-sl} and \eqref{eq:xl-sl}, and summing over all discrete time steps from $1$ to $T$, we derive the following relation:
\begin{equation}
\label{eq:vt-v0}
v^l(T) - v^l(0) = W^l \sum_{i=1}^T x^{l-1}(i) - \sum_{i=1}^T s^l(i) \theta^l.
\end{equation}
Define the time-averaged spike output of layer $l-1$ as
\begin{equation}
\phi^{l-1}(T) = \frac{1}{T}\sum_{i=1}^{T} x^{l-1}(i).
\end{equation}
Treating $\phi^l(T)$ as the effective activation of the converted SNN,we subtract the ANN activation equation from $\phi^l(T)$ and obtain the layer-wise conversion error formulation:
\begin{equation}
\phi^{l} - a^l= \mathbf{W}^{l} \phi^{l-1} - h\big(W^l a^{l-1}\big) - \frac{v^{l}(T) - v^{l}(0)}{T},
\label{eq:3.9}
\end{equation}
where $h(\cdot)$ denotes the element-wise activation function, which is commonly chosen as ReLU for deep ANNs.

\section{Analysis of conversion error}
\label{sec:distributions_of_membrane_potential}



We focus on the IF neuron model, where spike activations from presynaptic neurons are modeled as Poisson processes $k_j \sim \text{Poisson}\left(\nu_j \Delta t\right)$ \cite{gerstner2002spiking,burkitt2006review,scarpetta2013effects}. Here, $k_j$ denotes the spike count and $\nu_j$ represents the mean firing rate. The membrane potential increment of postsynaptic neuron $i$ is formulated as
\begin{equation}
    \Delta V_i = \sum_{j=1}^{M} w_{ij} \cdot k_j \cdot \theta ,
\end{equation}
which corresponds to the summation of $M$ independent Poisson random variables. By the Central Limit Theorem (CLT) \cite{feller1971introduction}, the distribution of $\Delta V_i$ asymptotically converges to a normal distribution as the number of input synapses grows:
\begin{equation}
\begin{split}
\Delta V_i \xrightarrow[M\to\infty]{\text{CLT}}
\mathcal{N}\biggl(\sum_{j=1}^{M} w_{ij} \nu_j \Delta t \theta,\,
\sum_{j=1}^{M} w_{ij}^2 \nu_j \Delta t \theta^2\biggr).
\end{split}
\label{eq:clt_dist}
\end{equation}

All the following analyses assume the approximated input distribution to be Gaussian.

\subsection{Distribution of Residual Membrane Potential under Quantization}
\label{subsec:qcfs_rmp_distribution}
For conciseness, we abbreviate residual membrane potential as RMP and residual membrane potential distribution as RMPD in the following text. We conduct a detailed analysis on the membrane potential statistics of samples from the same category within the quantization framework, with particular focus on the distribution of RMP.

We adopt the simulated membrane potential derived from the ANN branch for analysis. According to Eq.~\eqref{eq:QCFS-ann-al}, the RMP of IF neurons at time $T$ under the ANN formulation is formulated as
\begin{equation}
u^l = a^l - \lambda^{l} \, \mathrm{clip}\!\left( \frac{1}{L} \left\lfloor \frac{z^{l} L}{\lambda^{l}} + \varphi \right\rfloor, 0, 1 \right).
\end{equation}

Consider all input samples belonging to a single class. Classical statistical theory states that all intra-class samples follow an identical data-generating distribution. Based on the Central Limit Theorem (CLT), the input signal received by each neuron approximately obeys a normal distribution.
Formally, $a^l \sim \mathcal{N}(\mu_l, \sigma_l^2)$, where $\mu_l$ denotes the input mean and $\sigma_l^2$ denotes the input variance.

Without loss of generality, define $\lceil \frac{\max(a^l)L}{\theta } + 0.5 \rceil = M_i$, where $\lceil \cdot \rceil$ represents the ceiling operator. Combining this definition with Eq.~\eqref{eq:QCFS-ann-al}, when $a^l\in[\frac{\lambda^l \cdot k_i}{L},\frac{\lambda^l \cdot (k_i+1)}{L}]$, we have
$u^l = a^l - \frac{\lambda^l \cdot k_i}{L}$, with $k_i = \lceil \frac{a^l L}{\theta} + 0.5 \rceil$ and $k_i \in \{0, 1, \dots, M_i\}$. Under this condition, the RMP denoted by $u^l$ also follows a normal distribution, i.e., $u^l \sim \mathcal{N}\big(\mu_l-\frac{\lambda^l \cdot k_i}{L}, \sigma_l^2\big)$.

From the above derivation, the full support of input $a^l$ is partitioned into $M_i$ equal-length subintervals, each with length $\frac{\theta}{L}$. 
As a result, the overall RMP distribution is essentially a mixture of $M_i$ truncated normal distributions with identical variance, where each component is truncated onto $[0, \frac{\theta}{L}]$.

If one single subinterval dominates the mixture, the aggregated overall distribution can be approximated by a single Gaussian. 
However, for most neurons, the RMP distribution is jointly shaped by multiple input subintervals, which leads to visible deviation from the normal distribution and usually forms a multimodal profile, where each mode corresponds to the mean of the normal distribution of one subinterval. A concrete illustration is provided in Fig.~\ref{fig:four-grid}.

\begin{figure}[!t]
  \raggedbottom
  \centering
  \begin{minipage}{0.9\columnwidth}
    \centering
    \includegraphics[width=\linewidth]{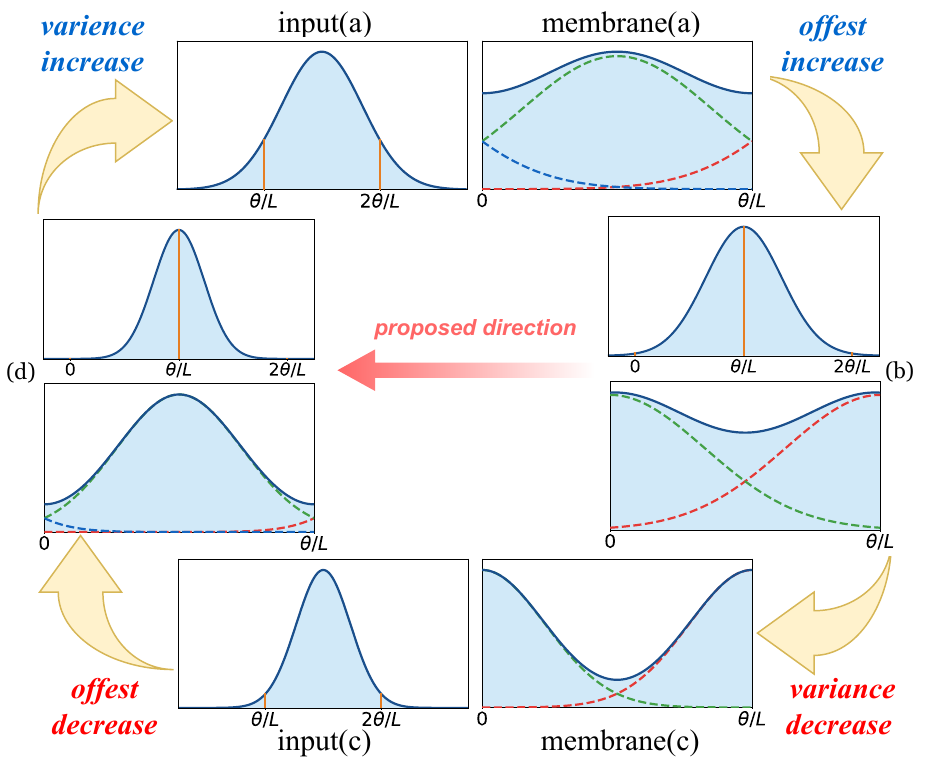}
  \end{minipage}
  \caption{Distribution of residual membrane potential. The subplots demonstrate RMPD under distinct input distributions. Case (a) is dominated by the interval $[\frac{\theta}{L},\frac{2\theta}{L}]$, while case (b) receives equal contributions from $[0,\frac{\theta}{L}]$ and $[\frac{\theta}{L},\frac{2\theta}{L}]$. Correspondingly, the RMP distribution in (a) exhibits near-Gaussian behavior, whereas the distribution in (b) severely departs from Gaussianity. Figure (c),(d) correspond to figure (a),(b) with smaller variance. Comparing case (c),(d) with (a),(b), we can tell that reducing  variance of neuron input amplify the error-reduction effect. }
  \label{fig:four-grid}
\end{figure}

In what follows, we discuss the benefits of Gaussian-like RMP distributions over multimodal distributions, which yields Theorem~\ref{thm:theorem_1}.
Compared with unimodal distributions\cite{jiang2025adaptive,yan2025training}, multimodal RMP distributions yield higher probability density near the interval boundaries, i.e., the neighborhoods of $0$ and $\frac{\theta}{L}$ (as proven in Theorem 1).
Such boundary-concentrated RMP serves as the primary source of mismatch between the spike firing rates of SNNs and the activation outputs of source ANNs.
Specifically, consider the $i$-th neuron in layer $l$. Without the ideal assumption that the mean outputs of the preceding layer are identical for the ANN and SNN ($a^{l-1} = x^{l-1}$), the layer-wise propagated mean activation $a^{l}$ of the ANN cannot offset the steady-state mean spike output of the SNN, where $x^l=\phi^l$ denotes the time-averaged output of layer $l$.The membrane potential update rule for SNN neurons follows
\begin{equation}
    u^{l}_{i}(t+1) = \mathrm{clip}\bigl(u^{l}_{i}(t) + x^{l}_{i} - \theta \, z^{l}_{i}(t),\, 0,\, \theta\bigr) .
\end{equation}
Without loss of generality, we define the inherent input bias as
$
\delta_{\mathrm{in},i} = \frac{x^{l}_{i}}{L} - a^{l}_{i} > 0.
$ 
When the normalized residual membrane potential $u^l$ of the ANN falls within the critical interval
$\Bigl(\frac{\theta}{L} - \delta_{\mathrm{in},i},\ \frac{\theta}{L}\Bigr),$
the quantization clipping constraint yields the following relation:
\begin{equation}
\begin{split}
\frac{x^{l}_{i}}{L}
&= \delta_{\mathrm{in},i} + a^{l}_{i}
= \delta_{\mathrm{in},i} + u^l + \theta \, \clip\!\left(\frac{1}{L}\biggl\lfloor \frac{z^{l} L}{\theta} + \varphi \biggr\rfloor\right) \\
&> \frac{\theta}{L} + \theta \cdot \clip\left( \frac{1}{L} \left\lfloor \frac{z_i L}{\theta} + \varphi \right\rfloor \right).
\end{split}
\label{eq:quant_clip}
\end{equation}
Accordingly, under sufficient spike firing in the SNN, the actual spike count of neuron $i$ is one more than the approximate spike activation derived from the ANN. Conversely, when the residual membrane potential $u^l$ takes a small value,the SNN spike count becomes one less than the theoretical value predicted by the ANN.

In summary, the boundary-heavy multimodal RMP distribution induced by quantization introduces systematic clipping error, which ultimately leads to an inherent mismatch between the mean activation of the ANN and the statistical mean spike output of the converted SNN.

\subsection{{Theoretical Analysis}}
The following theorem elaborates the relationship between the membrane potential distribution and the set of intervals $U$,and identifies the optimal RMPD that minimizes the conversion error under certain conditions. For the sake of concise proof derivation, the initial membrane potential $m_{\mathrm{init}}$ is temporarily omitted in our analysis.

\begin{theorem}
\label{thm:theorem_1}
(1) For an interval of fixed length defined over a normal distribution $\Phi$, the integral over this interval attains its maximum value if and only if the mean of the normal distribution coincides with the midpoint of the interval.

(2) Let the set of intervals be 
\begin{equation}
U = \biggl\{\biggl[0, \frac{\theta}{L}\biggr], \biggl[\frac{\theta}{L}, \frac{2\theta}{L}\biggr], \dots, \biggl[\frac{(L-1)\theta}{L}, \theta\biggr]\biggr\}. 
\end{equation}
Denote $U_T$ as the special case where exactly one interval satisfies the extremum condition stated in (1). 
The corresponding RMP distribution $F_{U_T}(x)$ is symmetric, and its gradient equals zero at both the midpoint and boundaries of each subinterval in $U_T$.

(3) When $L=4$ and $\frac{\theta}{L} > 2\sigma$, the boundary probability density of the distribution defined in (2) reaches its minimum. Formally, for any given $\epsilon>0$, define the boundary region $B=[0,\epsilon] \cup [\frac{\theta}{L}-\epsilon, \frac{\theta}{L}]$. The following relation holds:
\begin{equation}
\inf_{U}\int_{B} f_U(x) dx = \int_{B} f_{U_T}(x) dx,
\end{equation}
where $f_U(x)$ stands for the probability density function formed by superimposing normal distributions truncated on each subinterval of $U$. This indicates that $U_T$ yields the locally minimal probability density near interval boundaries.

In statements (1) and (2), we adopt the ideal assumption that the distribution strictly follows a complete normal distribution $\Phi=\mathcal{N}(\mu, \sigma^2)$. To better fit real input characteristics in statement (3), we approximate the distribution with symmetric truncated normal distributions, where the normalization constant $c$ satisfies $c \to 1$.
\end{theorem}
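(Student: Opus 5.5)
Part (1) reduces to a one-variable optimization. For a fixed interval $[a,a+h]$ and $\Phi=\mathcal{N}(\mu,\sigma^2)$, set
\[
g(\mu)=\int_a^{a+h}\tfrac{1}{\sigma}\varphi\bigl(\tfrac{x-\mu}{\sigma}\bigr)dx=\Phi_0\bigl(\tfrac{a+h-\mu}{\sigma}\bigr)-\Phi_0\bigl(\tfrac{a-\mu}{\sigma}\bigr),
\]
with $\Phi_0$ the standard normal CDF. Differentiating gives $g'(\mu)=\tfrac{1}{\sigma}\bigl[\varphi(\tfrac{a-\mu}{\sigma})-\varphi(\tfrac{a+h-\mu}{\sigma})\bigr]$. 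The standard density $\varphi$ is even and strictly decreasing in $|\cdot|$, so $g'(\mu)=0$ exactly when $|a-\mu|=|a+h-\mu|$, i.e. $\mu=a+h/2$. Moreover $g'>0$ for $\mu<a+h/2$ and $g'<0$ for $\mu>a+h/2$. This unimodality gives both directions of the ``if and only if''.

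For part (2), I read $U_T$ as the configuration in which the input mean $\mu_l$ sits at the midpoint $(k+\tfrac12)\theta/L$ of some cell of $U$. Following the mixture description of Section~\ref{subsec:qcfs_rmp_distribution}, the RMP density on $[0,\theta/L]$ is then
\[
f_{U_T}(x)=c\sum_{j}\tfrac{1}{\sigma}\varphi\bigl(\tfrac{x+j\theta/L-\mu_l}{\sigma}\bigr),
\]
obtained by folding each cell onto $[0,\theta/L]$.

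\textbf{Symmetry.} The reflection $x\mapsto \theta/L-x$ maps the component from cell $j$ to that from cell $2k-j$, because $\mu_l$ is centred in cell $k$. The sum is therefore invariant, apart from the effect of truncating to $L$ cells. I will handle that either by the paper's $c\to1$ idealization or by assuming cells far from $k$ carry negligible mass.

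\textbf{Zero gradient.} A symmetric differentiable function has zero derivative at its centre, which settles the midpoint. At the boundaries, the folded density is the restriction of a $\theta/L$-periodic function: in the idealized infinite-cell sum, the periodization of a Gaussian. That function is also even about the cell centre, hence even about every boundary as well, so its derivative vanishes at $0$ and $\theta/L$. I will state the boundary claim in this idealized periodized sense, since with finitely many cells it holds only approximately.

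Part (3) is the main obstacle, because the infimum is over all configurations $U$, which amounts to all shifts of $\mu_l$ modulo $\theta/L$.
\begin{itemize}
\item Parametrize by the offset $s\in[0,\theta/L)$ of $\mu_l$ within its cell, and write $P(s)=\int_B f_s(x)\,dx$.
\item Use $\theta/L>2\sigma$ to bound the contribution of non-adjacent cells by Gaussian tails beyond $2\sigma$. This leaves essentially the dominant cell plus at most one neighbour.
\item With the $c\to1$ approximation, $P(s)$ is the total Gaussian mass in the $2\epsilon$-wide bands around the lattice points $m\theta/L$.
\item The band around a given lattice point has mass decreasing in the distance from $\mu_l$ to that point, by the monotonicity established in (1). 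Hence the nearest-lattice distance should be maximized, which happens at $s=\theta/(2L)$, i.e. $U_T$.
\end{itemize}

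To make the last step rigorous I would compute $P'(s)$ and show its sign pattern: negative, then zero at the midpoint, then positive. This follows from comparing $\varphi$ at the symmetric distances $s$ and $\theta/L-s$ and using $\theta/L>2\sigma$ to control the cross terms from the far cells. The specific value $L=4$ enters only to fix the finite number of cells, so that the tail terms are explicitly summable. The conclusion is then a local (indeed global, in $s$) minimum at $U_T$.
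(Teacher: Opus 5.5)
Parts (1) and (2) are fine. In (1) you differentiate in $\mu$, where the paper compares the masses of $U\setminus U'$ and $U'\setminus U$. In (2) your periodization argument for the zero slope at the cell boundaries is more convincing than the paper's appeal to truncation.

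The gap is the last bullet of (3). Write $g(r)=\frac{1}{\sigma}\varphi(r/\sigma)$ for the input density at distance $r$ from the mean. Moving $\mu_l$ a distance $\Delta$ off the cell midpoint brings it $\Delta$ closer to one flanking boundary and $\Delta$ farther from the other. The dominant part of the boundary mass therefore changes from $4\epsilon\, g(\tfrac{\theta}{2L})$ to $2\epsilon\,[g(\tfrac{\theta}{2L}-\Delta)+g(\tfrac{\theta}{2L}+\Delta)]$.

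That each band's mass decreases with its distance from $\mu_l$ says nothing about the sign of this change. It is a second-order question, so ``maximize the nearest-lattice distance'' does not follow. Your last bullet never uses $\theta/L>2\sigma$. Yet in this two-boundary model the second derivative at the midpoint is $4\epsilon\, g''(\tfrac{\theta}{2L})$, which has the sign of $(\tfrac{\theta}{2L})^2-\sigma^2$. So for $\theta/L<2\sigma$ the midpoint is a local \emph{maximum}, and an argument blind to that distinction cannot be valid.

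The missing ingredient is the one the paper's proof rests on: $g$ is convex for $r>\sigma$. The hypothesis $\theta/L>2\sigma$ is exactly what places both flanking boundaries of $U_T$ at distance $\tfrac{\theta}{2L}>\sigma$. Midpoint convexity then gives $g(\tfrac{\theta}{2L}-\Delta)+g(\tfrac{\theta}{2L}+\Delta)\ge 2g(\tfrac{\theta}{2L})$ for $0\le\Delta\le\tfrac{\theta}{2L}-\sigma$. You use the hypothesis only to discard far cells; its essential role is this convexity.

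The same issue affects your plan for $P'(s)$. Take $s$ as the offset of $\mu_l$ from the left end of its cell. The two flanking terms give $P'(s)\approx 2\epsilon\,[g'(s)-g'(\tfrac{\theta}{L}-s)]$. So what must be compared at $s$ and $\tfrac{\theta}{L}-s$ is $g'$, i.e.\ $r\,g(r)$, not $g$. Since $r\,g(r)$ increases on $[0,\sigma]$ and decreases on $[\sigma,\infty)$, this is negative on $[\sigma,\tfrac{\theta}{2L})$, which is the same convexity fact.

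However, this two-term expression is positive for small $s$. There the sign is settled only after adding the boundary at distance $\tfrac{\theta}{L}+s$. Its contribution to $P'$ is as large as that of the boundary at $\tfrac{\theta}{L}-s$ (they cancel at $s=0$), so it is not a small cross term that $\theta/L>2\sigma$ can control.

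Your sketch therefore supports only the local statement for $|s-\tfrac{\theta}{2L}|\le\tfrac{\theta}{2L}-\sigma$, which is what the paper proves. The global claim does hold in the untruncated periodized model, for every $\sigma$. But that follows from the unimodality of the wrapped normal density, a tool your outline does not supply.
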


Proof of the Theorem \ref{thm:theorem_1} is deferred to Appendix. Experimental validation demonstrates that this condition holds for the vast majority of neurons, which is detailed in Section~\ref{subsection:detail}.
Furthermore, the conclusion derived under $L=4$ can be generalized to arbitrary time-step settings, which verifies the universal validity of statement (3).

The theorem above reveals that shifting the interval covering the input mean such that the input mean coincides exactly with the interval midpoint yields a symmetric unimodal RMPD close to Gaussian. Meanwhile, this adjusted distribution minimizes conversion errors from the perspective of residual membrane potential statistics.

\section{Methods}
\subsection{{SRMP-based Regularization}}
Based on the analysis in the previous section, we need to adjust the overall membrane potential distribution to satisfy the condition given in Theorem \ref{thm:theorem_1}. To this end, we propose a regularization strategy named SRMP, which reduces conversion errors by shifting the membrane potential distribution.

\subsubsection{\textbf{Initial Membrane Potential Adjustment}}
\label{subsec:init_membrane_adjust_reg}
To realize the shifting of input membrane potentials, we modify the fixed initial membrane potential $\frac{\theta}{2}$ of IF layers to $m_{\mathrm{init}} \cdot \theta$, where $m_{\mathrm{init}}$ denotes a trainable parameter initialized to 0.5. Taking the adjustable initial membrane potential into account, the input range is partitioned into the following set of intervals:
\begin{equation}
\begin{split}
U=\biggl\{
&\bigl[-m_{\mathrm{init}}\theta,\ \bigl(\tfrac{1}{L}-m_{\mathrm{init}}\bigr)\theta\bigr],\quad\\
&\bigl[\bigl(\tfrac{1}{L}-m_{\mathrm{init}}\bigr)\theta, \bigl(\tfrac{2}{L}-m_{\mathrm{init}}\bigr)\theta\bigr], \\
&\quad\dots,\quad\\
&\bigl[\bigl(\tfrac{L-1}{L}-m_{\mathrm{init}}\bigr)\theta,\ \bigl(1-m_{\mathrm{init}}\bigr)\theta\bigr]
\biggr\}.
\end{split}
\end{equation}

This partition corresponds to a global interval shift of $-m_{\mathrm{init}}\theta$. According to Theorem 1, for samples belonging to class $n$, let $\mathbb{E}[I_n^i]$ denote the mean input signal received by the $i$-th neuron. We define the loss term $L_n^i$ as the squared distance between the midpoint of the target interval and $\mathbb{E}[I_n^i]$.

\begin{equation}
D(x) = x - \mathrm{round}(x),
\end{equation}
\begin{equation}
d_n^i = \frac{\mathbb{E}(I_{n}^i)}{\theta} L + m_{\mathrm{init}} - 0.5,
\end{equation}
\begin{equation}
L_n^i = \lambda_{n} \left\| D(d_{n}^i) \right\|_{2}^{2}.
\end{equation}
Here, $d_n^i$ represents the shift offset, $\lambda_n$ is the regularization weight, and $\mathrm{round}(x)$ stands for the rounding function that maps $x$ to its nearest integer. Class-specific regularization weights $\lambda_n$ are introduced to control the update magnitude and bias direction of $m_{\mathrm{init}}$, enabling customized adjustment for each individual class.

The average regularization loss of the $i$-th neuron across all $N$ classes is formulated as
\begin{equation}
\mathcal{L}_{\mathrm{RMPD}}^i=\frac{1}{N}\sum_{n=1}^N \lambda_n \cdot L_{n}^i.
\end{equation}
By averaging over all neurons within the IF layer, we derive the layer-wise regularization loss:
\begin{equation}
\mathcal{L}_{\mathrm{RMPD}} = \frac{1}{M}\sum_{i=1}^M \mathcal{L}_{\mathrm{RMPD}}^i.
\end{equation}
Here, $M$ represents the total number of neurons in the corresponding IF layer. We incorporate $\mathcal{L}_{\mathrm{RMPD}}$ of every IF layer into the overall training objective. The total loss function is written as
\begin{equation}
\label{eq:total-loss}
\mathcal{L}_{\mathrm{total}}=\mathrm{Loss}+\boldsymbol{\lambda}\cdot \mathcal{L}_{\mathrm{RMPD}},
\end{equation}
where $\boldsymbol{\lambda}$ denotes the hyperparameter controlling the strength of regularization. The fine-tuning procedure for the initial membrane potential $m_{\mathrm{init}}$ is visualized in Fig.~\ref{fig:gauss-spikes}, and the complete algorithm pseudocode is summarized in Algorithm~\ref{alg:dynamic_mem_ANN}.

\begin{figure}[!t]
    \raggedbottom
    \centering    \includegraphics[width=0.95\columnwidth]{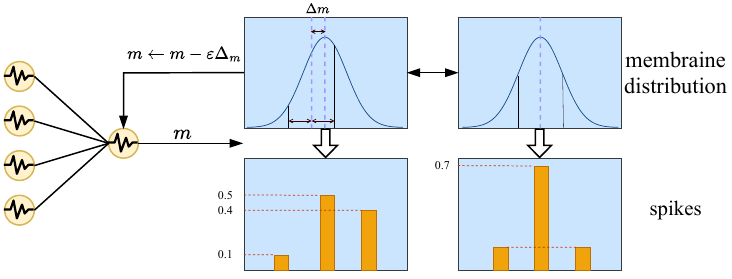}
    \caption{Fine-tuning of the initial membrane potential $m_{\mathrm{init}}$. The gradient step for updating $m_{\mathrm{init}}$ is proportional to the offset of the target interval. After each adjustment, the resulting RMPD becomes closer to a normal distribution, and the corresponding spike activations cluster more tightly around the characteristic firing frequency of each neuron.}
    \label{fig:gauss-spikes}
\end{figure}

\begin{algorithm}[t]
  \caption{Training ANN with dynamic $m_{\mathrm{init}}$ and membrane distribution loss}
  \label{alg:dynamic_mem_ANN}
  \begin{algorithmic}[1]
    \REQUIRE 
      Pretrained ANN model $\mathcal{M}_{\mathrm{ANN}}$ with initial $m_{\mathrm{init}}$;
      Total training iterations per epoch $I_{\mathrm{train}}$;
      Regularization weight $\lambda$;
      Learning rate $\varepsilon$;
      Training dataset $\mathcal{D}$;
      $\mathcal{L}_{\mathrm{RMPD}}$ function $\mathcal{L}(\boldsymbol{\mu},m_{\mathrm{init}})$, where $\boldsymbol{\mu}$ is the input mean tensor.
    \ENSURE ANN model with dynamic $m_{\mathrm{init}}$

    \FOR{$i = 1$ \TO $N_{\mathrm{epoch}}$}
      \FOR{each sample in $\mathcal{D}$}
        \STATE Sample miniBatch $(x, y)$ from $\mathcal{D}$
        \STATE $X_{\mathrm{output}} = \mathcal{M}_{\mathrm{ANN}}(x)$
        \STATE $\mathrm{Loss} = \mathrm{CrossEntropy}(X_{\mathrm{output}}, y)$
        \STATE $\boldsymbol{\mu} = \text{Tensor obtained by averaging inputs of each }$
        \STATE \quad$\text{class in the miniBatch}$
        \STATE $\mathcal{L}_{\mathrm{RMPD}} = \mathcal{L}(\boldsymbol{\mu},m_{\mathrm{init}})$
        \FOR{$l = 1$ \TO $\mathcal{M}_{\mathrm{ANN}}.\mathrm{layers}$}
          \STATE $m_{\mathrm{init}}^l \leftarrow m_{\mathrm{init}}^l - \varepsilon\left(
          \frac{\partial \mathrm{Loss}}{\partial m_{\mathrm{init}}^l}
          + \lambda \frac{\partial \mathcal{L}_{\mathrm{RMPD}}}{\partial m_{\mathrm{init}}^l}
          \right)$
        \ENDFOR
      \ENDFOR
    \ENDFOR
  \end{algorithmic}
\end{algorithm}

\subsubsection{\textbf{Dynamic Regularization Weight}}
\label{subsec:dynamic_reg_weight}

Intuitively, the magnitude of $\lambda_n$ is determined by two factors:

1. Classes with larger interval offsets should be prioritized. The regularization weight should be positively correlated with the loss value to accelerate the convergence of $m_{\mathrm{init}}$.

2. Input variances differ across distinct classes. For classes with small input variance, the input and output distributions are compact, yielding an RMPD that is inherently close to Gaussian with minor conversion error. Therefore, $\lambda_n$ should increase proportionally with the input variance.

To derive a simple yet effective weighting strategy, we establish the following theorem.

\begin{theorem}
\label{thm:theorem_2}
For inputs following an approximate normal distribution, the cumulative probability over the target interval $[m-\frac{\theta}{2L},m+\frac{\theta}{2L}]$ is negatively correlated with the input variance and negatively correlated with the interval offset $|\mu-m|$. Here, $\theta$ denotes the membrane firing threshold, $\mu$ is the mean of the normal input distribution, and $m$ represents the midpoint of the target interval.
\end{theorem}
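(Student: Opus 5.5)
We write the cumulative probability over the target interval in closed form and show that it is monotone in each parameter by differentiating. Let $h=\frac{\theta}{2L}$ denote the half-length of the target interval, let $\delta=|\mu-m|$ be the interval offset, and let $X\sim\mathcal{N}(\mu,\sigma^2)$ be the input. After standardizing,
\begin{equation}
P(\delta,\sigma)=\Pr\bigl(m-h\le X\le m+h\bigr)=\Phi\!\left(\frac{h-\delta}{\sigma}\right)+\Phi\!\left(\frac{h+\delta}{\sigma}\right)-1,
\end{equation}
where $\Phi$ is the standard normal CDF and $\varphi$ below denotes its density. The sign of $\mu-m$ does not matter: the Gaussian density is symmetric about $\mu$ and the interval is symmetric about $m$, so $P$ depends on the offset only through $\delta$. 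This reduction is consistent with statement (1) of Theorem~\ref{thm:theorem_1}, which already gives that the maximum over the offset is attained at $\delta=0$.

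\textbf{Dependence on the offset.} Differentiating in $\delta$ gives
\begin{equation}
\frac{\partial P}{\partial \delta}=\frac{1}{\sigma}\left[\varphi\!\left(\frac{h+\delta}{\sigma}\right)-\varphi\!\left(\frac{h-\delta}{\sigma}\right)\right].
\end{equation}
For $\delta>0$ we have $|h+\delta|>|h-\delta|$, and $\varphi$ is strictly decreasing in $|\cdot|$. Hence the derivative is strictly negative, so $P$ is strictly decreasing in $\delta$. This is the claimed negative correlation with the offset, and it strengthens part (1) of Theorem~\ref{thm:theorem_1} from "maximum at zero offset" to full monotonicity.

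\textbf{Dependence on the variance.} Differentiating in $\sigma$ gives
\begin{equation}
\frac{\partial P}{\partial \sigma}=-\frac{1}{\sigma^2}\left[(h-\delta)\,\varphi\!\left(\frac{h-\delta}{\sigma}\right)+(h+\delta)\,\varphi\!\left(\frac{h+\delta}{\sigma}\right)\right].
\end{equation}
When $\delta\le h$, meaning the mean lies inside the target interval, both bracketed terms are nonnegative and the second is strictly positive. The derivative is therefore strictly negative, and $P$ decreases as $\sigma^2$ grows.

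\textbf{Main obstacle.} The difficulty is the regime $\delta>h$, where the mean lies outside the target interval. There the first term $(h-\delta)\varphi(\cdot)$ is negative, and the derivative in $\sigma$ need not be negative. Indeed, a far-off mean with a tiny variance puts almost no mass on the interval, and widening the distribution increases that mass. The monotonicity in $\sigma$ therefore genuinely fails in general. I would handle this by restricting the variance claim to the setting the regularizer actually produces. The function $D(\cdot)$ rounds the shifted mean to the nearest interval, so the target interval is by definition the one containing $\mu$. This gives $\delta\le h$ automatically, and I would state this explicitly as the condition under which the variance part holds. For completeness, I would also remark that when $\delta>h$ the sign of the $\sigma$-derivative is determined by comparing $(\delta-h)\varphi\bigl(\frac{\delta-h}{\sigma}\bigr)$ with $(\delta+h)\varphi\bigl(\frac{\delta+h}{\sigma}\bigr)$, which identifies the exceptional region.

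Together, the two derivative signs in the regime $\delta\le h$ justify the design of $\lambda_n$: a larger offset or a larger variance each lowers the probability mass on the target interval, so $\lambda_n$ should increase with both.
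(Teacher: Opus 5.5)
Your proof is correct, but it takes a different route from the paper's.

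\textbf{The paper's route.} For the variance, the paper substitutes $t=(x-\mu)/\sigma$. It then argues that increasing $\sigma$ shrinks the absolute values of both integration bounds, and hence lowers the integral. For the offset, it reuses the set-difference comparison from Theorem~\ref{thm:theorem_1}(1): every point of $U_0\setminus U_0'$ has higher Gaussian density than every point of $U_0'\setminus U_0$, after a reflection so that both intervals lie on the same side.

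\textbf{Your route.} You write $P=\Phi\bigl(\frac{h-\delta}{\sigma}\bigr)+\Phi\bigl(\frac{h+\delta}{\sigma}\bigr)-1$ and read off the signs of $\partial P/\partial\delta$ and $\partial P/\partial\sigma$ directly.

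\textbf{Offset.} Here the two arguments are equally strong. The paper's comparison is calculus-free and holds for all offsets. Your derivative gives the same strict monotonicity, plus an explicit rate.

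\textbf{Variance.} Here your approach is the more careful one. The paper's step ``the bounds shrink in absolute value, so the integral decreases'' is valid only when the two standardized bounds have opposite signs (or one is zero), that is, exactly when $\delta\le h$. The paper never states this hypothesis. Yet, as you show, the claim genuinely fails for $\delta>h$ with small $\sigma$, where widening the distribution moves mass onto the interval.

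\textbf{The missing hypothesis.} You observe that the rounding in $D(\cdot)$ selects the nearest interval midpoint, and that midpoints are spaced $2h=\theta/L$ apart. So $\delta\le h$ holds automatically for the interval actually used in $\lambda_n=f(p)$. This supplies the hypothesis the paper's proof tacitly relies on, and makes the theorem rigorous in the form the method actually uses.
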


Proof of the Theorem \ref{thm:theorem_2} is deferred to Appendix. From Theorem \ref{thm:theorem_2}, we parameterize the regularization weight as $\lambda_n = f(p)$ with the derivative satisfying $f' < 0$, where $p$ denotes the cumulative probability over the target interval. To amplify the weight discrepancy across different classes, concentrate the adjustment bias of $m_{\mathrm{init}}$ on the classes that require the most correction, we enforce the limit condition $\lim_{p \to 1} f(p) = 0$. Two candidate smooth functions are adopted for $f(p)$: the sigmoid-based form 
\begin{equation}
f(p)=1-\mathrm{sigmoid}(k \cdot p)=\frac{1}{1+\exp(kp)}, 
\end{equation}
and a steeper variant 
\begin{equation}
f(p)=\exp\big[k(p-0.5)\big]^{2l},
\end{equation}
where $k$ and $l$ are fixed hyperparameters. Examples of RMPD before and after adjustment of some IF neurons are shown in Fig. \ref{fig:pre-post-normalization}.

\begin{figure}[!t]
  \raggedbottom
  \centering
    \includegraphics[width=\linewidth]
    {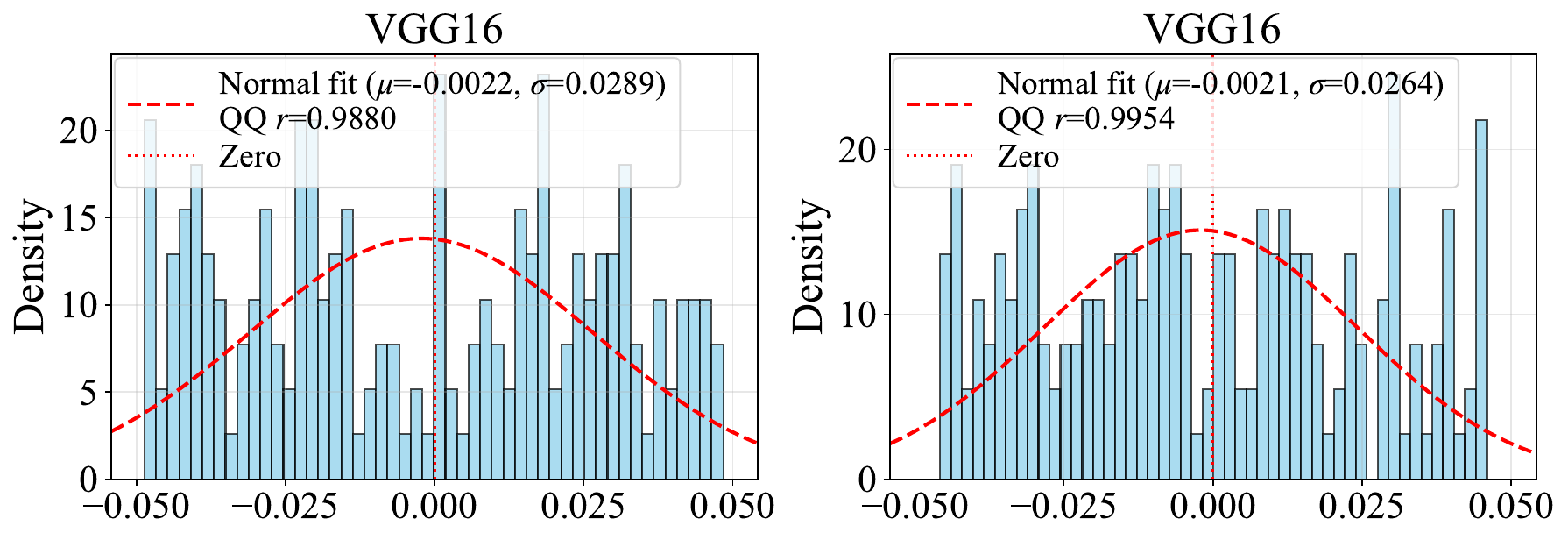}
    \includegraphics[width=\linewidth]
    {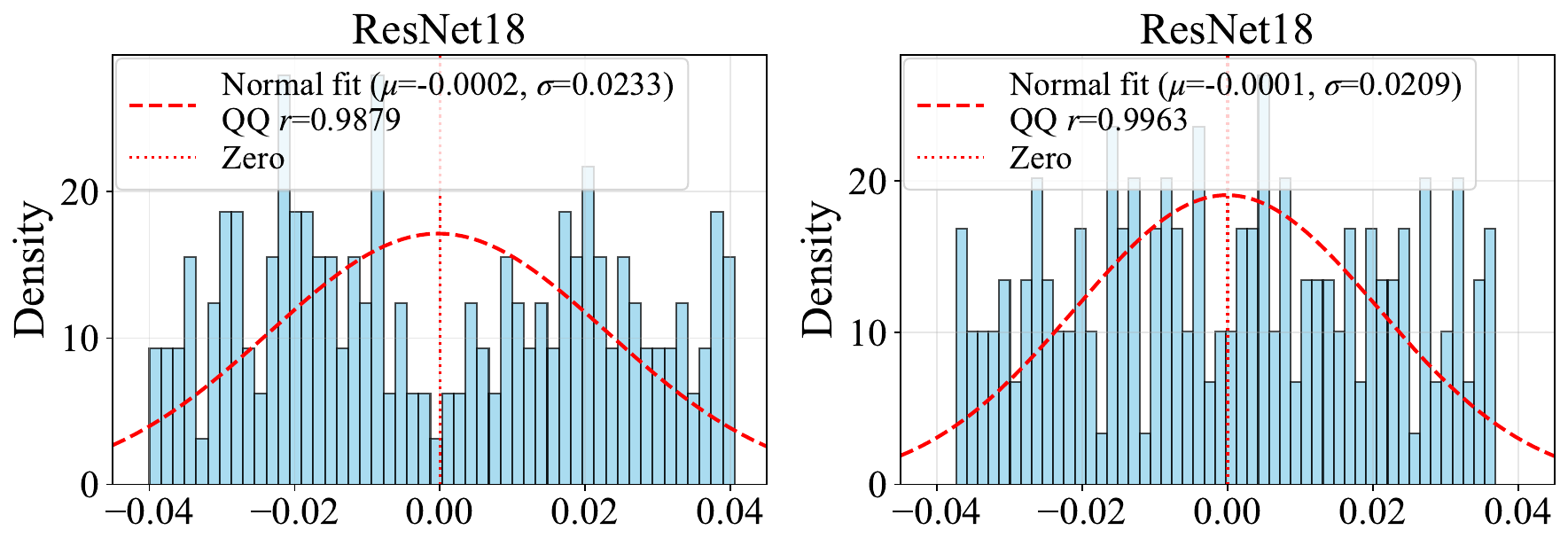}
    \includegraphics[width=\linewidth]
    {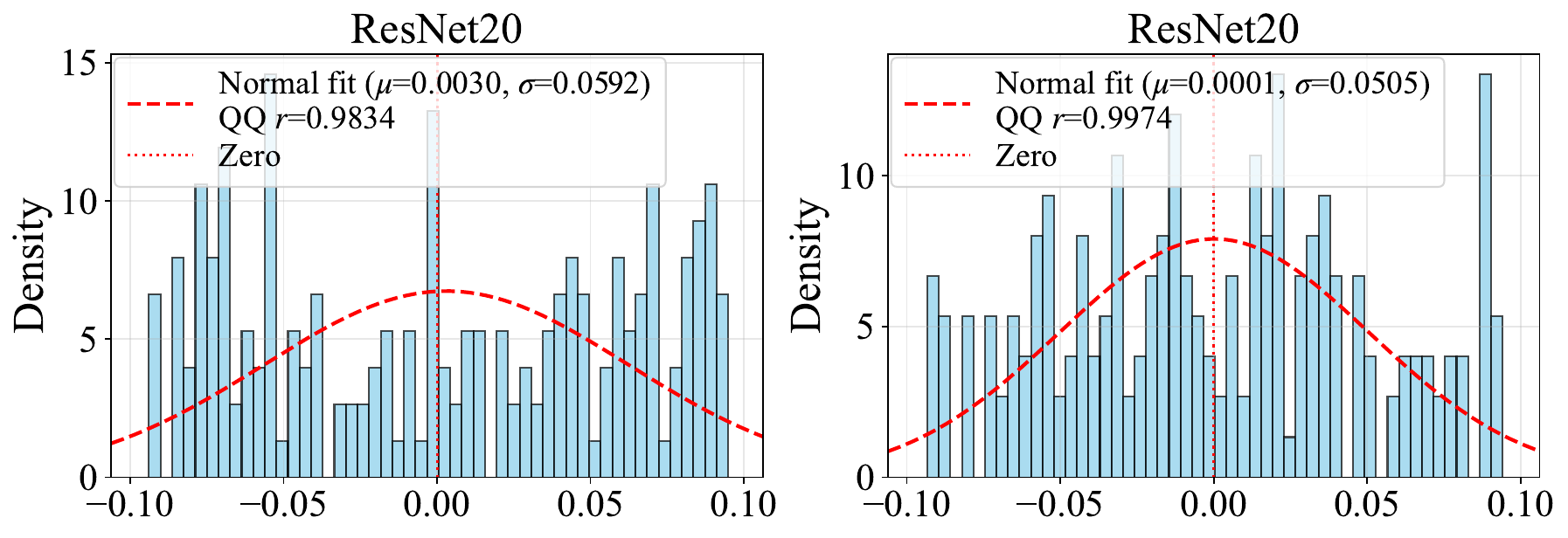}    
    \hspace{2em}
    {\footnotesize (a) pre-normalization }
    \hspace{5em}
    {\footnotesize (b) post-normalization }
  \caption{RMPD shapes before and after adjustment. Parameter QQ quantifies the similarity to Gaussian distribution, and it can be observed that the regularized RMPD is closer to Gaussian distribution after adjustment.}
  \label{fig:pre-post-normalization}
\end{figure}

\subsubsection{\textbf{Regularizer Design for Networks with General Activations}}
\label{subsubsec:general_Activations}
Quantized activation function in QCFS model exhibits natural compatibility with the SRMP regularization.
General neural networks without nonlinear activations share similar conversion error characteristics with the Quantized framework, and if the input distribution approximately follows a Gaussian distribution, the proposed $\mathcal{L}_{\mathrm{RMPD}}$ regularizer is still applicable with minor modifications, which we elaborate as follows.

Unlike analysis in Section \ref{sec:distributions_of_membrane_potential}, which we mitigates conversion errors by regulating the RMPD distribution, Eq.~\eqref{eq:vt-v0} indicates that the conversion error of activation-free networks directly equals $\text{RMP} - m_{\mathrm{init}}$. When the default initial membrane potential is fixed as $m_{\mathrm{init}}=\frac{\theta}{2}$, we aim to minimize overall error while preserving the mean value of membrane potentials. This requires concentrating the RMPD around $\frac{\theta}{2}$. 
Based on the analysis in Section~\ref{subsec:qcfs_rmp_distribution} and Theorem 1, the regularizer $\mathcal{L}_{\mathrm{RMPD}}$ remains valid for tuning $m\mathcal{L}_{\mathrm{init}}$. However, when the RMPD converges toward the midpoint $\frac{\theta}{2}$ of the interval $[0,\theta]$, any deviation of $m_{\mathrm{init}}$ from $0.5$ shifts the overall distribution mean and consequently amplifies conversion errors. To address this issue, we introduce a constant shift compensation for neuron outputs:
\begin{equation}
x_c^l=x^l+\big(0.5-m_{\mathrm{init}}\big)\theta,
\end{equation}
as illustrated in Fig.~\ref{fig:RMPD-none-Relu}. This compensation term remains constant during inference and training, introducing no additional computational overhead.

For the regularization weight $\lambda_n$, we adopt the formulation $\lambda_n=f(d^k\cdot p)$ with $f'>0$. A proper $k$ is selected such that $\lambda_n$ is positively correlated with the interval offset $d$. The theoretical justification for this weight design is provided in the Appendix.


\begin{figure}[!t]  
    \raggedbottom
    \centering      \includegraphics[width=0.95\columnwidth]{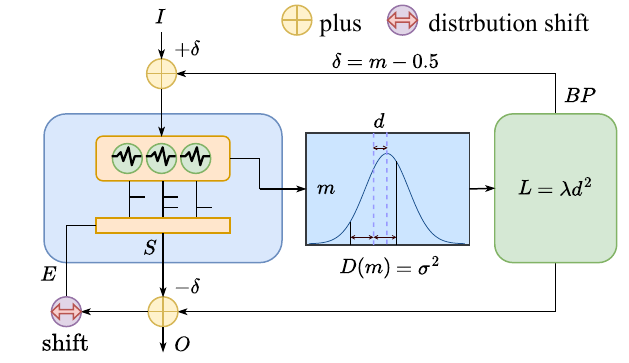}
    \caption{Fine-tuning of initial membrane potential with shift compensation}  
    \label{fig:RMPD-none-Relu}  
\end{figure}

\subsubsection{\textbf{Extension to Networks with Dual-Polarity Thresholds and Multi-Thresholds}}
For spiking models that support negative spikes, minor modifications can be applied to adapt our method. The spike generation rules are formulated as $s_-=\mathcal{H}(m-m_{\mathrm{init}}\cdot\theta_-)$ and $s_+=\mathcal{H}(-m-\theta_++m_{\mathrm{init}}\cdot\theta_+)$, where $\theta_+$ and $\theta_-$ denote the positive and negative firing thresholds, respectively.

For positive spikes, the equivalent initial membrane potential is $(m_{\mathrm{init}}-1)\cdot\theta$, and each positive spike contributes $+1$ to the total spike count. For negative spikes, the initial membrane potential is set to $m_{\mathrm{init}}\cdot\theta$, with each negative spike counted as $-1$. Consequently, the total spike count equals zero within the interval $[(m_{\mathrm{init}}-1)\theta,m_{\mathrm{init}}\theta]$ of length $\theta$, which still satisfies the equal-length subinterval assumption adopted in our derivation.

To validate the effectiveness of the proposed strategy, we conduct fine-tuning experiments on Transformer architectures based on the ECMT conversion model equipped with multi-threshold neurons (MT-N) \cite{huang2024highperformance}. 
Since the native Transformer spiking backbone does not incorporate a trainable $m_{\mathrm{init}}$ term, we simplify the optimization procedure by discarding the standard cross-entropy loss and solely employing $\mathcal{L}_{\mathrm{RMPD}}$ as the training objective: $\mathcal{L}_{\mathrm{total}}=\mathcal{L}_{\mathrm{RMPD}}$.

\subsection{Spike Competitive Refinement}

\label{subsec:lateral_inhibition_snn_training}

Based on the membrane potential statistical analysis in Section~\ref{sec:distributions_of_membrane_potential}, hidden-layer neurons fail to form highly specialized response patterns when the network’s inherent feature extraction and discrimination capacity is limited. For samples belonging to the same class, different neurons produce blurry and redundant activations responding to similar or correlated features, which manifests as spatial and temporal dispersion of spike activations. Such dispersion cannot be completely eliminated even if the initial membrane potential is tuned to its optimal value. To resolve this fundamental limitation from within the network architecture, we introduce the Spike Competitive Refinement (SCR) mechanism. 
SCR enhances feature contrast, which equivalently reduces the variance of neuron input signals and stabilizes input distributions. This tighter input concentration further compresses the spread of RMPD, amplifying the error-reduction effect brought by optimized initial membrane potentials, which is shown in Fig. \ref{fig:four-grid}. Meanwhile, more neurons fall into identical quantized subintervals, which improves the stability of spike firing patterns.


To clarify the propagation pipeline of spike generation and competition, we insert the SCR-Conv2d layer after the IF layer following standard convolution layers, as illustrated in Fig.~\ref{fig:LI_Conv2d}. The layer placed after IF layers nonlinearly attenuates IF outputs to a proper extent while retaining the computational efficiency brought by spike-based representation. A detailed complexity analysis is provided in Section \ref{sec:complexity_analysis}.

\begin{figure*}[t]
    \raggedbottom
    \centering    \includegraphics[width=0.98\textwidth]{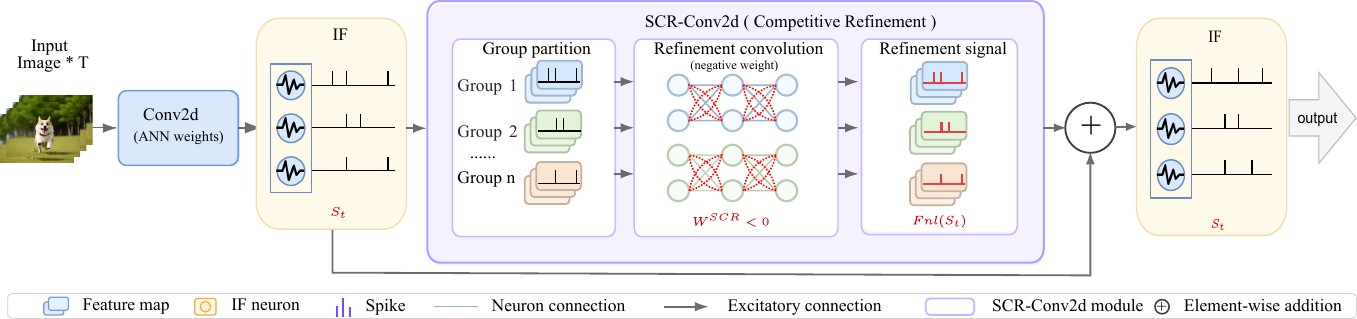}
    \caption{Architecture of SCR-Conv2d. The SCR-Conv2d layer is inserted after the standard convolutional layer. The outputs of the IF layer are transformed into refinement signal magnitudes via the nonlinear function $F_{\mathrm{nl}}$. An additional IF layer is appended after SCR-Conv2d to maintain spike transmission. This figure illustrates the spike-driven information propagation pipeline of the converted SNN within VGG-16.}
    \label{fig:LI_Conv2d}
\end{figure*}

We implement feature enhancement via a grouped convolution layer named SCR-Conv2d, where each channel corresponds to an independent group. Neurons within the same channel mutually suppress one another through trainable convolution kernels. Refinement signals directly act on spike sequences since membrane potentials do not exist within ANNs. In the ANN forward pass, the output signal after SCR-Conv2d  is formulated as:
\begin{equation}
a_{SCR}=a_{IF}+\mathrm{conv}(act,W,C).
\end{equation}
Here, $a_{IF}$ denotes the output of the IF layer; $act=F_{\mathrm{nl}}(f_{IF})$ is the nonlinear activation; $W$ stands for convolution kernel weights; $f_{IF}=\frac{a_{IF}}{L}$ represents the spike firing rate equivalent in the ANN branch; and $C$ denotes the total number of channels. All entries of $f_{IF}$ take values from the discrete set $\{0,\frac{1}{L},\frac{2}{L},\dots,1\}$. Accordingly, we define $F_{\mathrm{nl}}(\frac{i}{L})=s_i$, where each $s_i$ is a trainable parameter. To guarantee spike-form inputs for subsequent convolution layers, an additional IF layer is appended behind every SCR-Conv2d module.

{Conversion of SCR-Conv2d:}
When converting the ANN model into an equivalent SNN, three signal transformation schemes are proposed to maintain consistent behavior before and after conversion. Let $S_A$ denote the refinement signal derived from the ANN branch. 
\begin{enumerate}
    \item \textbf{First time-step mode}: refinement is only applied at the first time step with total refinement $S=S_A T$.
    \item \textbf{Every time-step mode}: The refinement signal is uniformly distributed across all time steps, yielding per-step strength $S=S_A$.
    \item \textbf{Spike-driven mode}: The instantaneous refinement signal is $S=\frac{S_A}{f} \cdot s$, where $s=\mathbb{I}\left(a_{\mathrm{IF}} >0\right)$ denotes the indicator function.
\end{enumerate}

We adopt spike-driven mode as the default setting in all experiments, as this scheme best matches the spike-triggered refinement mechanism observed in biological neurons.

\section{Experiment}
\label{chap:chapter7}

\subsection{{Experiment settings}}

\subsubsection{\textbf{Network Architectures}}
In this section, we validate the performance of two proposed strategies: fine-tuning the initial membrane potential via the designed regularizer, and integrating the SCR-Conv2d module. We name these two methods SRMP and SCR-SRMP, respectively.
Both schemes operate on pre-trained or converted SNN models, where regularization constraints are applied to fine-tune the initial membrane potentials of IF layers placed after fully-connected (FC) layers and convolutional layers. Compared with SRMP, SCR-SRMP inserts trainable SCR-Conv2d layers into the network backbone. If SCR-Conv2d layers are involved in the proposed methods, we first train the network equipped with SCR modules, and then perform fine-tuning on the initial membrane potentials. After completing full model pre-training, we conduct the secondary fine-tuning stage for a 10 epoches to adjust the initial membrane potentials of IF layers. To stabilize model behavior, firing thresholds and network weights are frozen during this stage, and we retain the checkpoint with the optimal validation performance. Note that the dynamic weight $\lambda_n$ depends on input statistics. To stabilize regularization loss, we update $\lambda_n$ once every 10 training iterations.

\subsubsection{\textbf{ANN-to-SNN Settings}}
For ANN-to-SNN conversion, we compare our approaches against state-of-the-art image classification methods on three standard benchmarks: CIFAR-10 \cite{krizhevsky2009learning}, CIFAR-100 \cite{krizhevsky2009learning}, and ImageNet \cite{deng2009imagenet} under QCFS framework. Following prior literature, we adopt VGG-16 \cite{simonyan2014very}, ResNet-18, ResNet-20 and ResNet-34 as source ANNs. For SCR-Conv2d layer initialization, we adopt a kernel size of 3. All kernel entries except the diagonal elements are initialized to negative values to avoid self-refinement of neurons. During training, a clamp function is applied to enforce non-positive kernel weights. For training simplicity, we replace the generic nonlinear mapping inside SCR-Conv2d with the sigmoid function. During fine-tuning, the batch size is set to 500, and the learning rate is 0.1. Regularization weight varies with the number of classes in the dataset, which is typically set to 4000 for CIFAR-10, 200 for CIFAR-100, and 1 for ImageNet.

\subsubsection{\textbf{Transformer-to-SNN Settings}}
Additionally, we evaluate the effectiveness of the SRMP regularizer on ECMT architectures\cite{huang2024highperformance} with multi-threshold neurons. Details of finetuning are similar to ANN-SNN conversion,  except that batch size is set to 64. This leads to a critical issue that, for datasets with a large number of classes, each class appears at most once in an individual batch. Such setting introduces excessive randomness and lacks statistical validity. To address this problem, we devise adjustments for batch sampling and the statistics of fine-tuning measurements, as detailed in \ref{subsection:detail}.

\subsection{Comparison with Other Competitive CNN-based Conversion Algorithms}
We compare our method against several mainstream high-performance ANN-SNN conversion algorithms on CIFAR-10, CIFAR-100 and ImageNet datasets, including QCFS\cite{bu2023optimal}, SRP from Hao et al.\cite{hao2023reducing}, SNM from Wang et al.\cite{wang2022signed}.

Full accuracy comparisons are listed in Table~\ref{tab:cifar10_100_result_2}. On CIFAR-10, CIFAR-100 datasets, our model outperforms almost all the other conversion methods when T=2,4,8 (note that SRP needs 4 extra time steps). Besides, on ImageNet, SCR-SRMP achieves an accuracy of $56.46\%$ at 16 timesteps for VGG16, $5.54\%$ higher compared with QCFS, and also outperforms QCFS at other time steps. At larger time steps, the baseline accuracy saturates with little potential for further improvement. In addition, the assumptions of the theorem tend to break down, as reflected in the nearly‑uniform RMPD , which leads to marginal performance improvement.

\begin{table*}[!t]
  \raggedbottom
  \centering
  \setlength{\tabcolsep}{10.pt}  
  \caption{Comparison with state-of-the-art methods on  the CIFAR-10,CIFAR-100 and ImageNet dataset}
  \label{tab:cifar10_100_result_2}
\begin{tabular}{l l l c c c c c c c}
\toprule
Method & Net. & Param. (M) & ANN & 2 & 4 & 8 & 16 & 32 & 64 \\
\midrule
\multicolumn{9}{c}{\textbf{CIFAR-10}} \\
\midrule
QCFS\cite{bu2023optimal} & VGG16 & 138.36 & 95.78 & 90.67 & 94.21 & 95.40 & 95.73 & 95.74 & 95.83 \\
SNM\cite{wang2022signed} & VGG16 & 138.36 & 95.72 & 90.55 & 94.54 & 95.56 & 95.78 & 95.77 & 95.82 \\
SRP($\tau$=4)\cite{hao2023reducing} & VGG16 & 138.36 & 95.62 & $-$ & 95.43 & 95.55 & 95.58 & 95.57 & 95.60 \\
NQQCFS\cite{huang2025residual} & VGG16 & 138.36 & 95.21 & 91.93 & 94.80 & 95.48 & 95.70 & 95.79 & 95.72 \\
\textbf{SCR-SRMP} & VGG16 & 138.36 & 95.65 & \textbf{92.84} & \textbf{94.80} & \textbf{95.66} & \textbf{95.78} & \textbf{95.80} & \textbf{95.86} \\
\midrule
QCFS\cite{bu2023optimal} & ResNet-18 & 11.69 & 96.48 & 92.86 & 94.92 & 96.21 & 96.55 & 96.64 & 96.52 \\
SRP($\tau$=4)\cite{hao2023reducing} & ResNet-18 & 11.69 & 96.58 & $-$ & 95.25 & 95.60 & 95.55 & 95.55 & 95.58 \\
NQQCFS\cite{huang2025residual} & ResNet-18 & 11.69 & 95.52 & 93.72 & 95.37 & 96.21 & 96.38 & 96.31 & 96.36 \\
ECL\cite{liu2025ecl} & ResNet-18 & 11.69 & \textbf{96.48} & \textbf{94.75} & 95.11 & 96.30 & 96.62 & \textbf{96.68} & \textbf{96.67} \\
\textbf{RCS-SRMP} & ResNet-18 & 11.69 & 96.57 & {93.65} & \textbf{95.50} & \textbf{96.33} & \textbf{96.65} & 96.67 & \textbf{96.67} \\
\midrule
QCFS\cite{bu2023optimal} & ResNet-20 & 0.27 & 91.86 & 77.74 & 86.52 & 90.79 & 92.27 & 92.64 & \textbf{92.85} \\
SRP($\tau$=4)\cite{hao2023reducing} & ResNet-20 & 0.27 & 91.84 & $-$ & 90.51 & 91.37 & 91.64 & 91.72 & 91.80 \\
NQQCFS\cite{huang2025residual} & ResNet-20 & 0.27 & 85.18 & 77.30 & 84.54 & 87.43 & 88.26 & 88.51 & 88.45 \\
\textbf{RCS-SRMP} & ResNet-20 & 0.27 & 91.84 & \textbf{78.41} & \textbf{86.95} & \textbf{91.04} & \textbf{92.35} & \textbf{92.69} & 92.81\\
\midrule
\multicolumn{9}{c}{\textbf{CIFAR-100}} \\
\midrule
QCFS\cite{bu2023optimal} & VGG16 & 138.36 & 76.48 & 64.95 & 71.24 & 75.14 & 77.01 & 77.18 & 77.42 \\
SNM\cite{wang2022signed} & VGG16 & 138.36 & 76.53 & 65.31 & 72.00 & 76.04 & {77.23} & {77.37} & 77.17 \\
SRP($\tau$=4)\cite{hao2023reducing} & VGG16 & 138.36 & 76.48 & $-$ & 76.27 & 76.45 & 76.59 & 76.61 & 76.55 \\
NQQCFS\cite{huang2025residual} & VGG16 & 138.36 & 74.86 & \textbf{69.39} & \textbf{74.57} & 76.73 & \textbf{77.68} & \textbf{77.64} & \textbf{77.66} \\
\textbf{RCS-SRMP} & VGG16 & 138.36 & 76.45 & {65.72} & {72.69} & \textbf{76.96} & 77.10  & 77.32 & {77.45} \\
\midrule
QCFS\cite{bu2023optimal} & ResNet-18 & 11.69 & 78.69 & 69.36 & 75.45 & {78.55} & 79.49 & 79.60 & 79.54 \\
NQQCFS\cite{huang2025residual} & ResNet-18 & 11.69 & 76.66 & \textbf{70.86} & \textbf{76.81} & \textbf{78.85} & 79.44 & 79.62 & 79.46 \\
\textbf{RCS-SRMP} & ResNet-18 & 11.69 & 78.46 & {69.73} & {76.05} & 78.48 & \textbf{79.51} & \textbf{79.63} & \textbf{79.59}\\
\midrule
QCFS\cite{bu2023optimal} & ResNet-20 & 0.27 & 65.18 & 36.16 & 50.75 & 61.56 & \textbf{66.13} & \textbf{67.04} & 67.07 \\
SRP($\tau$=4)\cite{hao2023reducing} & ResNet-20 & 0.27 & 65.11 & $-$ & 59.34 & 62.94 & 64.71 & 65.50 & 65.82 \\
NQQCFS\cite{huang2025residual} & ResNet-20 & 0.27 & 62.34 & 33.87 & 50.28 & 60.93 & 64.73 & 65.43 & 65.35 \\
\textbf{RCS-SRMP} & ResNet-20 & 0.27 & 64.74 & \textbf{38.52} & \textbf{53.25} & \textbf{62.82} & 65.87 & 67.03 & \textbf{67.09} \\
\midrule
\multicolumn{9}{c}{\textbf{ImageNet}} \\
\midrule
QCFS\cite{bu2023optimal} & VGG16 & 138.36 & 71.29 & \text{1.48} & 4.71 & 19.07 & 50.92 & 68.33 & \textbf{70.88} \\
\textbf{RCS-SRMP} & VGG16 & 138.36 & 71.39 & 1.15 & \textbf{4.85} & \textbf{23.41} & \textbf{56.46} & \textbf{68.51} & 70.48 \\
\midrule
QCFS\cite{bu2023optimal} & ResNet-34 & 21.81 & 74.88 & 4.78 & 13.00 & 35.54 & 60.52 & \textbf{69.42} & \textbf{72.86} \\
\textbf{RCS-SRMP} & ResNet-34 & 21.81 & 72.94 & \textbf{6.16} & \textbf{14.38} & \textbf{37.90} & \textbf{60.74} & 69.01 & 72.48 \\
\bottomrule
\end{tabular}
\end{table*}

\subsection{Comparison to Other Conversion Methods on ViT Models}

Comparison with state-of-the-art spiking Transformer methods on ImageNet dataset are provided in Table \ref{tab:imagenet_compare}. $T_{ration}$ indicates the degree to which model accuracy degrades as the time step decreases, values approaching 1 correspond to higher model stability.  Our experiments on the ImageNet1k dataset outperforms most of the Transformer-SNN and ANN-SNN methods, and further push the frontiers of neural network accuracy without degrading efficiency. SRMP achieves an accuracy of $78.12\%$ at 6 time steps for ViT-B/16, higher than that of MST\cite{wang2023masked} at 128 time steps, and only $0.6\%$ lower compared with STA\cite{jiang2024spatio} at 32 time steps.

\subsection{Effect of SRMP on ViT under ECMT Framework}

We further evaluate the classification accuracy of fine-tuned ViT-S/16, ViT-B/16 and ViT-L/16 on CIFAR-10, CIFAR-100 and ImageNet datasets, equipped with the SRMP regularizer based on the ECMT framework\cite{huang2024highperformance}. The original ECMT framework adopts distinct positive and negative thresholds, yet this design yields no manifest accuracy improvement. For seamless integration with our fine-tuning pipeline, we unify the positive and negative thresholds without sacrificing classification performance. Consistent accuracy improvements are observed in T=2,4 compared with the vanilla ECMT model, as reported in Table~\ref{tab:vit_cifar10-100}. We achieve an accuracy of $7.19\%$ higher at T=2 for ViT-B/16 on ImageNet, and $1.75\%$ at T=4.

\begin{table}[!t]
\raggedbottom
\centering
\setlength{\tabcolsep}{5pt}
\caption{Comparison results on Transformer}
\label{tab:vit_cifar10-100}
\begin{tabular}{cccccc}
\toprule
\multirow{2}{*}{Arch.} & \multirow{2}{*}{Method} & 
\multirow{2}{*}{Param. (M)} &
\multirow{2}{*}{Original (ANN)} & \multicolumn{2}{c}{SNN} \\[-0.35ex]
\cmidrule(lr){5-6}
& & & & $T=2$ & $T=4$ \\
\midrule
\multicolumn{5}{c}{\textbf{CIFAR-10}} \\
\midrule
ViT-S/16 & ECMT\cite{huang2024highperformance} & 22 & 98.33 & 24.81 & 92.78 \\
ViT-S/16 & \textbf{SRMP} & 22 & 98.33 & \textbf{26.62} & \textbf{94.31} \\
ViT-B/16 & ECMT\cite{huang2024highperformance} & 86 & 98.73 & 85.67 & 96.76 \\
ViT-B/16 & \textbf{SRMP} & 86 & 98.73 & \textbf{87.17} & \textbf{97.03} \\
ViT-L/16 & ECMT\cite{huang2024highperformance} & 307 & 99.07 & 93.49 & 98.71 \\
ViT-L/16 & \textbf{SRMP} & 307 & 99.07 & \textbf{94.24} & \textbf{99.00} \\
\midrule
\multicolumn{5}{c}{\textbf{CIFAR-100}} \\
\midrule
ViT-S/16 & ECMT\cite{huang2024highperformance} & 22 & 89.28 & 10.88 & 77.03 \\
ViT-S/16 & \textbf{SRMP} & 22 & 89.28 & \textbf{16.53} & \textbf{78.56} \\
ViT-B/16 & ECMT\cite{huang2024highperformance} & 86 & 92.26 & 64.66 & 86.19 \\
ViT-B/16 & \textbf{SRMP} & 86 & 92.26 & \textbf{65.21} & \textbf{86.41} \\
ViT-L/16 & ECMT\cite{huang2024highperformance} & 307 & 93.84 & 69.62 & 89.65 \\
ViT-L/16 & \textbf{SRMP} & 307 & 93.84 & \textbf{71.12} & \textbf{90.13} \\
\midrule
\multicolumn{5}{c}{\textbf{ImageNet}} \\
\midrule
ViT-S/16 & ECMT\cite{huang2024highperformance} & 22 & 78.04 & 6.41 & 60.43 \\
ViT-S/16 & \textbf{SRMP} & 22 & 78.04 & \textbf{7.43} & \textbf{60.96} \\
ViT-B/16 & ECMT\cite{huang2024highperformance} & 86 & 80.77 & 12.33 & 67.84 \\
ViT-B/16 & \textbf{SRMP} & 86 & 80.77 & \textbf{19.52} & \textbf{69.59} \\
ViT-L/16 & ECMT\cite{huang2024highperformance} & 307 & 84.88 & 75.38 & 83.20 \\
ViT-L/16 & \textbf{SRMP} & 307 & 84.88 & \textbf{76.95} & \textbf{83.34} \\
\bottomrule
\end{tabular}
\vspace{-0.5em}
\end{table}

\begin{table*}[!t]
  \centering
  \caption{Comparison with previous works on ImageNet1k dataset}
  \footnotesize 
  \setlength{\tabcolsep}{11pt} 
  \begin{tabular}{l l c c c c c}
    \toprule
    Method & Conversion Type & Arch. & Param. (M) & T & $T_{ratio}$ & Acc(\%) \\
    \midrule
    \multirow{2}{*}{QCFS\cite{bu2023optimal}} & \multirow{2}{*}{ANN-to-SNN}
      & ResNet-34 & 21.8 & 64 & $-$ & 72.35 \\
    & & VGG-16 & 138 & 64 & $-$ & 72.85 \\
    \midrule
    \multirow{2}{*}{SRP\cite{hao2023reducing}} & \multirow{2}{*}{ANN-to-SNN}
      & ResNet-34 & 21.8 & 4(64) & 0.0625 & 66.71(68.61) \\
    & & VGG-16 & 138 & 4(64) & 0.0625 & 66.46(69.43) \\
    \midrule
    MST\cite{wang2023masked} & Transformer-to-SNN & Swin-T(BN) & 28.5 & 128(512) & 0.25 & 77.88(78.51) \\
    \midrule
    STA\cite{jiang2024spatio} & Transformer-to-SNN & ViT-B/32 & 86 & 32(256) & 0.125 & 78.72(82.79) \\
    \midrule
    \multirow{4}{*}{\textbf{ECMT-SRMP(Ours)}} & \multirow{4}{*}{Transformer-to-SNN}
      & ViT-S/16 & 22 & 8(10) & 0.8 & 76.03(77.07) \\
    & & ViT-B/16 & 86 & 6(10) & 0.6 & 78.12(80.59) \\
    & & ViT-L/16 & 307 & 4(8) & 0.5 & 83.34(84.71) \\
    \bottomrule
  \end{tabular}
  \label{tab:imagenet_compare}
\end{table*}

\subsection{Trade-off Between Conversion Error and Model Accuracy}
We recorded the variations of $Loss$ and $\mathcal{L}_\mathrm{RMPD}$ during the fine-tuning process, as shown in Fig. \ref{fig:Loss and LRMPD during fine-tuning}, A delicate balance is maintained between conversion error mitigation and classification accuracy loss during the tuning of $m_{\mathrm{init}}$.
From Eq.~\eqref{eq:total-loss}, we derive the gradient of total loss with respect to $m_{\mathrm{init}}$:
\begin{equation}
\frac{\partial \mathcal{L}_{\mathrm{total}}}{\partial m_{\mathrm{init}}} = \frac{\partial \mathrm{Loss}}{\partial m_{\mathrm{init}}} + \boldsymbol{\lambda} \cdot \frac{\partial \mathcal{L}_{\mathrm{RMPD}}}{\partial m_{\mathrm{init}}}.
\end{equation}

\begin{figure}[!t]
  \centering    \includegraphics[width=0.49\columnwidth]{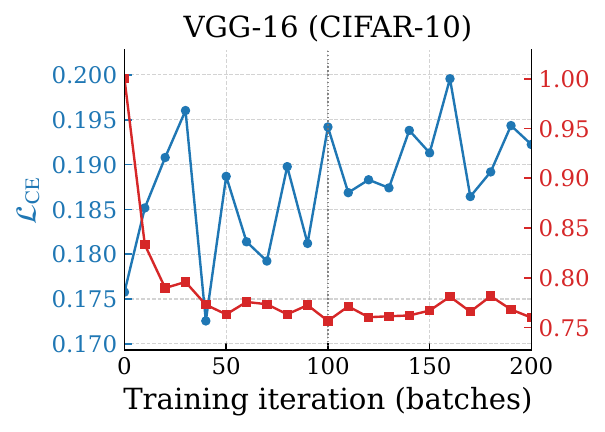}    \includegraphics[width=0.49\columnwidth]{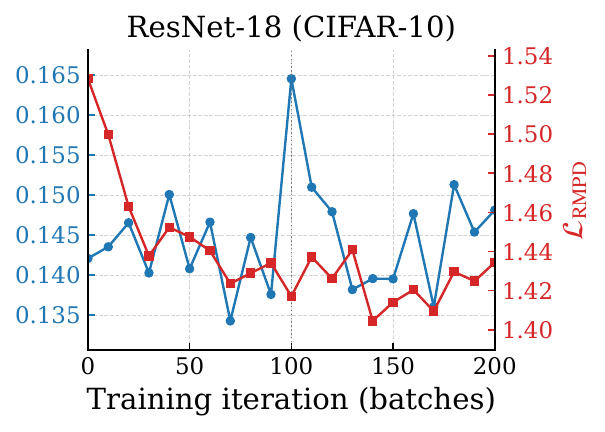}
    \hspace{0.2mm}     \includegraphics[width=0.485\columnwidth]{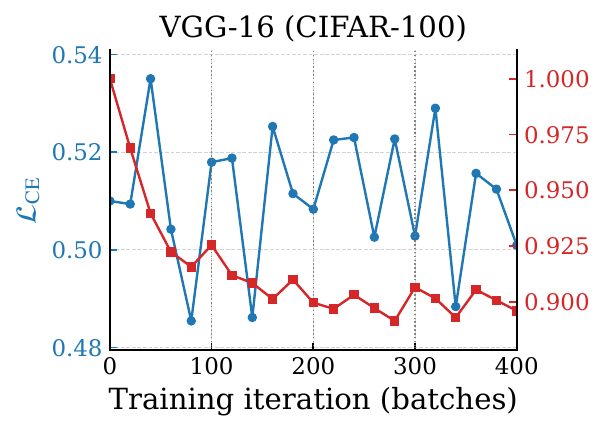}    \includegraphics[width=0.48\columnwidth]{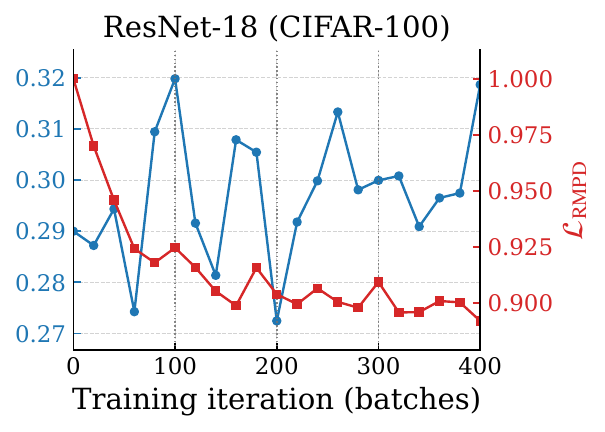}    \includegraphics[width=0.49\columnwidth]{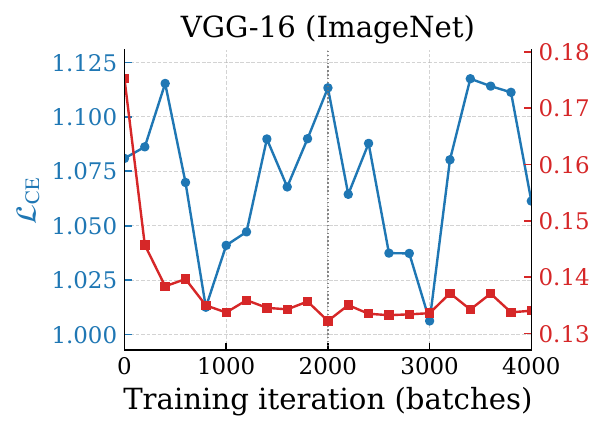}    \includegraphics[width=0.49\columnwidth]{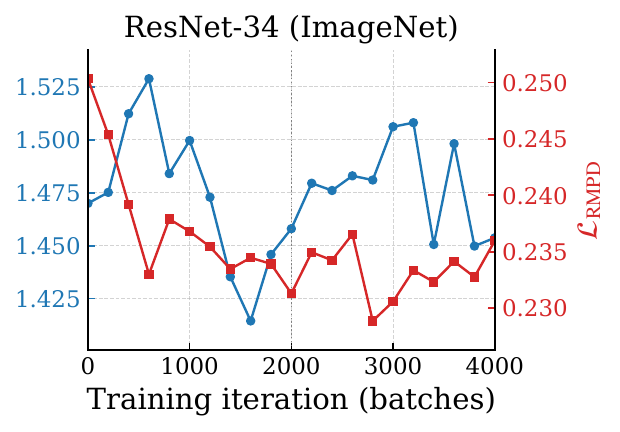}
\caption{Variations of $Loss$ and $\mathcal{L}_\mathrm{RMPD}$ during fine-tuning on CIFAR-10, CIFAR-100 and ImageNet.
$\mathcal{L}_\mathrm{RMPD}$ steadily decline, while $Loss$ fluctuate mildly or rise slightly.}
\label{fig:Loss and LRMPD during fine-tuning}
\end{figure}

This gradient is jointly governed by two components: the gradient of classification loss and the gradient of the distribution regularization loss, which balances the accuracy gain from optimized membrane potential distribution and the accuracy drop induced by ANN-SNN conversion.

Concretely, let $p$ denote the probability of spike mismatch caused by residual membrane potential falling inside the boundary region $B$. The theoretical expected conversion error is $\frac{p\theta}{L}\int_{B} f_U(x) dx$. As proven in Theorem 1, this conversion error is positively correlated with $\mathcal{L}_{\mathrm{RMPD}}$. Accordingly, $\frac{\partial \mathcal{L}_{\mathrm{RMPD}}}{\partial m_{\mathrm{init}}}$ quantifies how adjusting $m_{\mathrm{init}}$ impacts conversion errors, while $\frac{\partial \mathrm{Loss}}{\partial m_{\mathrm{init}}}$ captures the influence of $m_{\mathrm{init}}$ drift on classification performance.

At the early fine-tuning stage, $\int_{B} f_U(x) dx$ and $\mathcal{L}_{\mathrm{RMPD}}$ remain large, so the regularization loss dominates the update direction of $m_{\mathrm{init}}$. As training proceeds and $\mathcal{L}_{\mathrm{RMPD}}$ gradually decreases, the two gradient terms converge toward a relatively balanced state.

\subsection{Effect of Loss Induction on ViT}

We remove the Loss from $\mathcal{L}_{total}$
in Transformer to only record the variations of $\mathcal{L}_\mathrm{RMPD}$ for the ViT throughout fine-tuning. As shown in Fig. \ref{fig:li_loss_avg_vit_sbl_combined}, we evaluate the practical fine-tuning performance of ViT-S/16, ViT-B/16 and ViT-L/16 on CIFAR-10/100 and ImageNet. It can be observed that the average distribution shift gradually decreases and converges to a plateau during the fine-tuning process on most of the ViT models. Performance of ViT-L/16 on CIFAR-10 and CIFAR-100 exhibit a slow‑declining trend, yet it eventually stabilizes. The evident reduction of target interval shift after fine-tuning demonstrates the validity of $\mathcal{L}_\mathrm{RMPD}$, which further verifies its effectiveness in general activation-free ANN-SNN conversion. 

\begin{figure}[!t]
    \raggedbottom
    \centering
    \includegraphics[width=0.98\columnwidth]{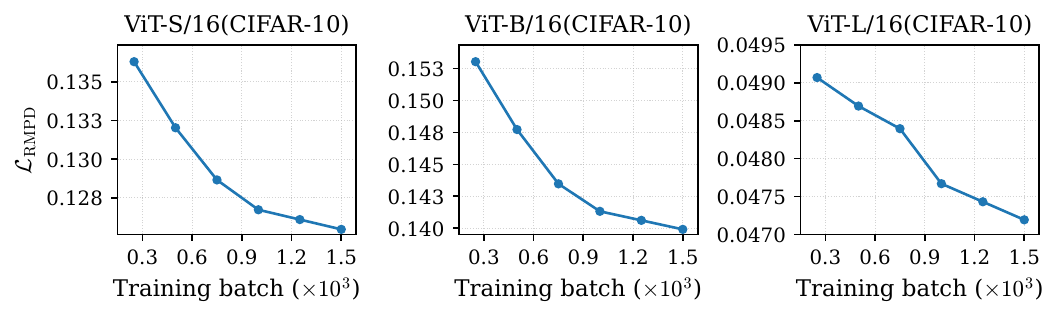}
    \includegraphics[width=0.98\columnwidth]{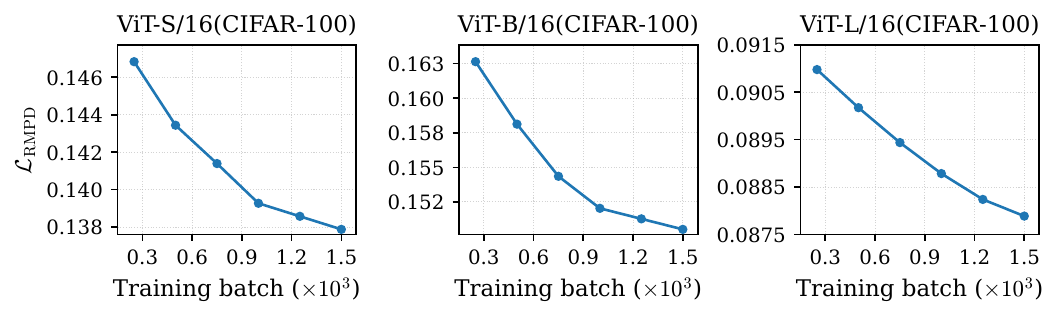}
    \includegraphics[width=0.98\columnwidth]{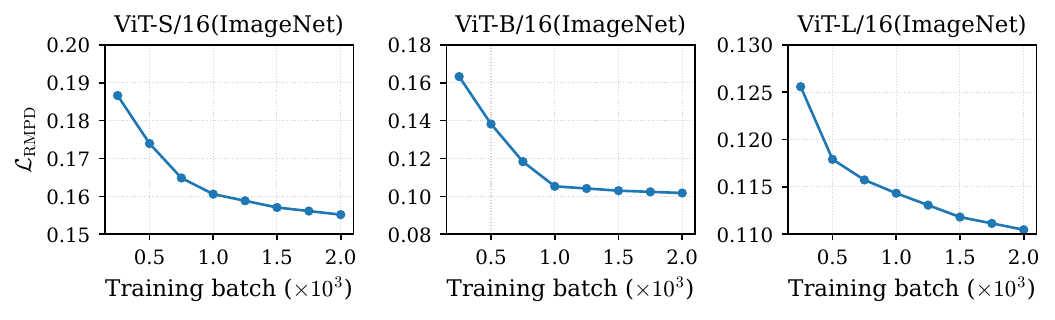}
    \caption{Curves of $\mathcal{L}_\mathrm{RMPD}$ induction in ViT-S/16, ViT-B/16 and ViT-L/16 model on CIFAR-10, CIFAR-100, ImageNet. ALL the curves decline steadily and flatten out over time. }
    \label{fig:li_loss_avg_vit_sbl_combined}
\end{figure}



\subsection{Conversion Accuracy Evaluation}

We first compare the classification accuracy of SCR-SRMP, SRMP, SCR and the baseline QCFS\cite{bu2023optimal} on CIFAR-10 dataset. Experimental results demonstrate that both SRMP and SCR-SRMP achieve consistent accuracy gains over vanilla QCFS, and they also outperform the naive ANN-SNN conversion scheme. SRMP features a concise and efficient pipeline; noticeable performance improvements can be observed after merely 10 fine-tuning epochs. The SCR-Conv2d module in SCR-SRMP delivers prominent performance gains under low time-step settings ($T=2,4$). Taking VGG-16 on the CIFAR-10 dataset as an example, SCR-SRMP achieves an accuracy of $92.84\%$ at $T=2$, substantially exceeding the accuracy of original QCFS. Detailed numerical results are summarized in Table~\ref{tab:cifar10_result}. As discussed in subsection \ref{subsec:lateral_inhibition_snn_training} and illustrated in the table, SCR-Conv2d primarily serves to enhance the fine‑tuning efficacy of SRMP; it yields no stable accuracy improvement when used alone. Therefore, we only evaluate SCR-SRMP in the subsequent full‑scale tests.

\begin{table}[!t]
  \raggedbottom
  \centering
  \setlength{\tabcolsep}{5.5pt}
  \caption{Comparison of test results among SRMP, SCR-SRMP, SCR and QCFS on CIFAR-10}
  \label{tab:cifar10_result}
  \begin{tabular}{lcccccc}
    \toprule
    \textbf{Method} & \textbf{Net.} & \textbf{ANN} & \textbf{2} & \textbf{4} & \textbf{8} & \textbf{16} \\
    \midrule
    \multicolumn{7}{c}{\textbf{CIFAR-10}} \\
    \midrule
    QCFS\cite{bu2023optimal} & VGG16 & 95.78 & 90.67 & 94.21 & 95.40 & 95.73 \\
    \textbf{SRMP} & VGG16 & 95.60 & {91.49} & \textbf{94.82} & \textbf{95.66} & \textbf{95.85} \\
    \textbf{SCR} & VGG16 & 95.65 & {91.92} & {94.36} & 95.40 & 95.77 \\
    \textbf{SCR-SRMP} & VGG16 & 95.65 & \textbf{92.84} & {94.80} & 95.46 & 95.78 \\
    \midrule
    QCFS\cite{bu2023optimal} & ResNet-18 & 96.48 & 92.86 & 94.92 & 96.21 & 96.55 \\
    \textbf{SRMP} & ResNet-18 & 96.57 & {93.34} & {95.20} & 96.02 & 96.51 \\
    \textbf{SCR} & ResNet-18 & 96.57 & {92.97} & {94.97} & 96.11 & 96.50 \\
    \textbf{SCR-SRMP} & ResNet-18 & 96.57 & \textbf{93.65} & \textbf{95.50} & \textbf{96.33} & \textbf{96.65} \\
    \midrule
    QCFS\cite{bu2023optimal} & ResNet-20 & 91.86 & 77.74 & 86.52 & 90.79 & 92.27 \\
    \textbf{SRMP} & ResNet-20 & 91.86 & {78.27} & {86.81} & 90.97 & 92.30 \\
    \textbf{SCR} & ResNet-20 & 91.84 & {78.31} & {86.51} & 90.77 & 92.23 \\
    \textbf{SCR-SRMP} & ResNet-20 & 91.84 & \textbf{78.41} & \textbf{86.95} & \textbf{91.04} & \textbf{92.35} \\
    \bottomrule
  \end{tabular}
\end{table}

\subsection{Ablation Study on Dynamic Regularization Weight}

In this section, we conduct ablation experiments to compare the performance of the dynamic regularization weight $\lambda_p$ and fixed constant regularization weight, and analyze the descending trend of loss term $\mathcal{L}_{\mathrm{RMPD}}$. Fig.\ref{fig:Ablation_six} reveals that $\mathcal{L}_{\mathrm{RMPD}}$ with fixed regularization weight achieves inferior convergence performance on CIFAR-10 compared to the dynamic weight $\lambda_p$. For CIFAR-100 and ImageNet, the fixed-weight scheme exhibits even worse convergence, or the loss $\mathcal{L}_{\mathrm{RMPD}}$ hardly decreases throughout training. It is worth noting that for ResNet-18 trained on CIFAR-10, the fixed-weight scheme yields a faster drop of $\mathcal{L}_{\mathrm{RMPD}}$. This stems from overfitting induced by the constant large regularization weight, which drastically degrades the original ANN’s accuracy. Consequently, fixed weight performs worse than the dynamic $\lambda_p$ in practice.

This phenomenon indicates that regularizer with a fixed weight fails to identify weight groups that require priority optimization, resulting in oscillating fine-tuning directions and negligible effective optimization progress. In contrast, the adaptive dynamic weight $\lambda_p$ strengthens inter-class contrast, thereby significantly boosting the overall convergence performance.

\begin{figure}[!t]
  \centering
    \includegraphics[width=0.98\columnwidth]{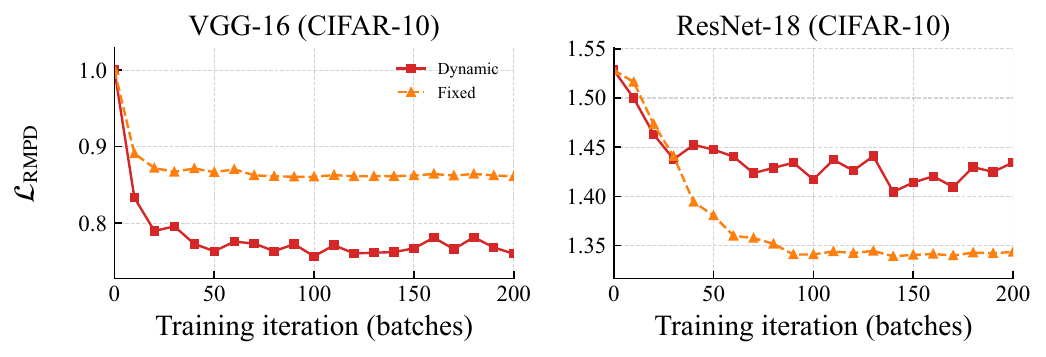}
    \includegraphics[width=0.98\columnwidth]{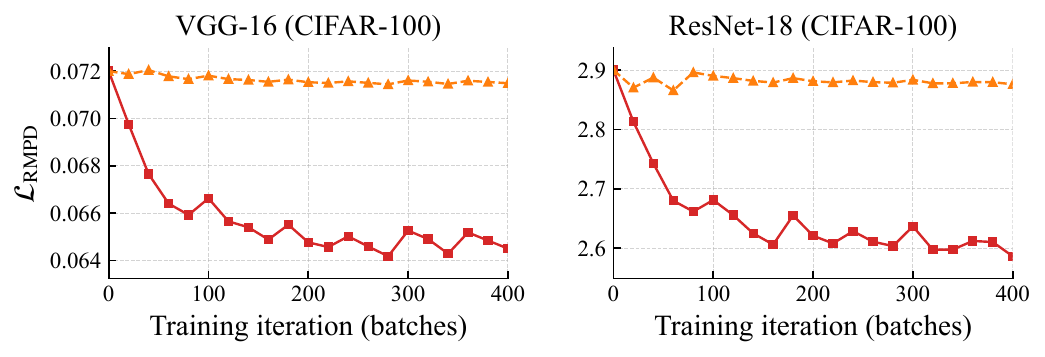}
    \includegraphics[width=0.98\columnwidth]{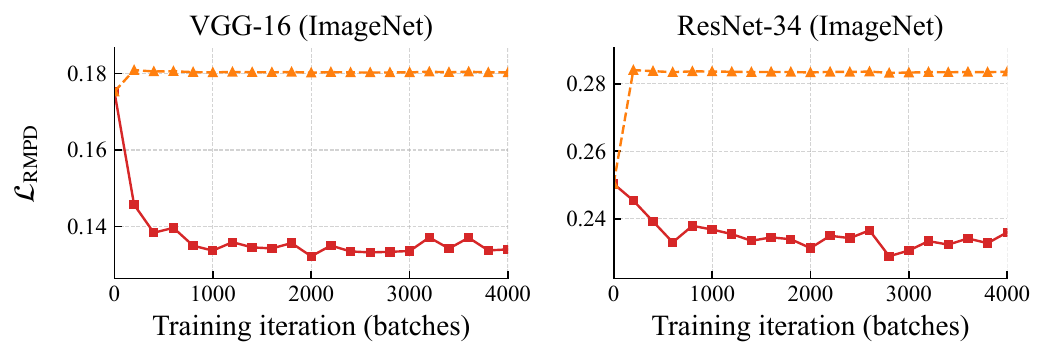}
\caption{Comparison with fixed weight and dynamic $\lambda_p$ for $\mathcal{L}_\mathrm{RMPD}$.}
\label{fig:Ablation_six}
\end{figure}

\subsection{Analysis of Input Mean Approximation}
\label{subsection:detail}

The class-wise input mean is generally approximated by averaging all samples belonging to the same class accumulated within a single batch or multiple consecutive batches. Suppose input samples follow the normal distribution $\Phi=\mathcal{N}(\mu, \sigma^2)$, with a total of $n$ samples and sample mean $\bar{I}$. The 95\% confidence interval of the true mean $\mu$ is formulated as:
\begin{equation}
\left[ \bar{I} - t_{0.025, n-1} \cdot \frac{\sigma}{\sqrt{n}},\ \bar{I} + t_{0.025, n-1} \cdot \frac{\sigma}{\sqrt{n}} \right].
\end{equation}
When $n\approx50$, the critical value satisfies $t_{0.025, n-1}\approx2$, which yields $t_{0.025, n-1} \cdot \frac{\sigma}{\sqrt{n}}\approx0.284\sigma$. This indicates the estimation error is less than $0.284\sigma$ with a confidence level of 95\%. Experimental verification demonstrates that $0.284\sigma<0.02$ for the vast majority of neurons, rendering this approximation sufficiently accurate. 
To demonstrate how many neurons satisfy this condition, we visualize the distribution of the input $\sigma$ of individual neurons in Fig. \ref{fig:vgg16_cifar10}. To intuitively characterize the magnitude of $\sigma$ via comparison with $\theta$, we plot the distribution of the ratio $\sigma/\theta$. It can be observed that the vast majority of $\sigma$ values remain small. Furthermore, the figure verifies the validity of the proposed theorem. 

\begin{figure}[!t]
  \raggedbottom
  \centering
  \begin{minipage}{0.48\columnwidth}
    \centering
    \includegraphics[width=\linewidth]{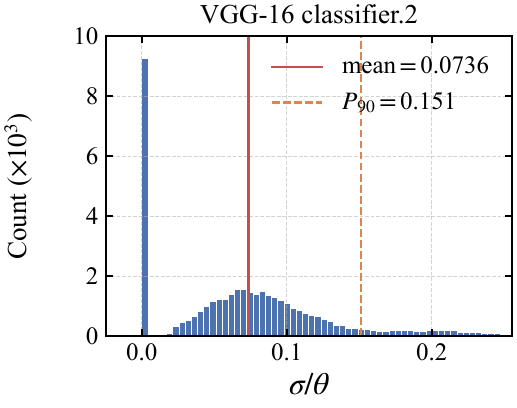}    
  \end{minipage}
  \begin{minipage}{0.48\columnwidth}
    \centering
    \includegraphics[width=\linewidth]{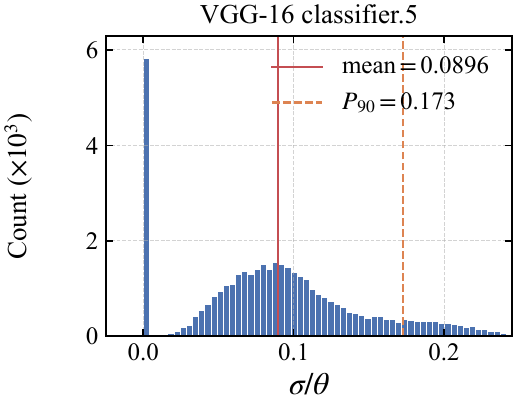}    
  \end{minipage}
  \begin{minipage}{0.48\columnwidth}
    \centering
    \includegraphics[width=\linewidth]{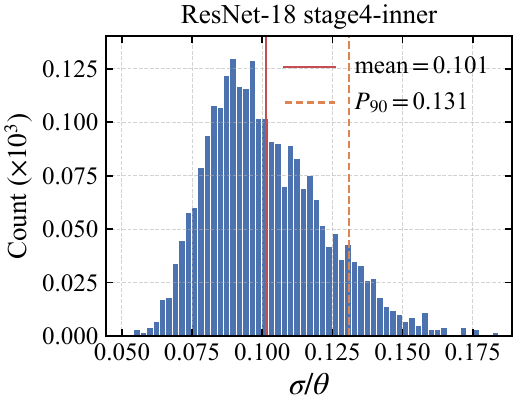}    
  \end{minipage}
  \begin{minipage}{0.48\columnwidth}
    \centering
    \includegraphics[width=\linewidth]{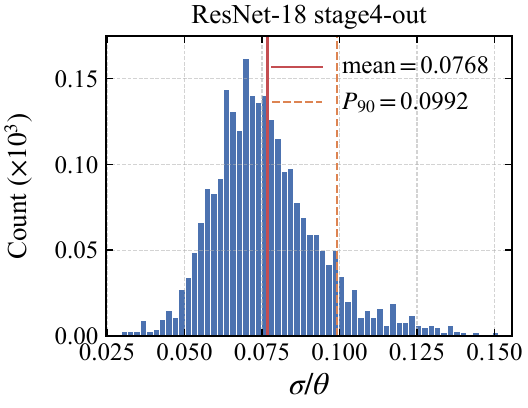}    
  \end{minipage}
  \begin{minipage}{0.48\columnwidth}
    \centering
    \includegraphics[width=\linewidth]{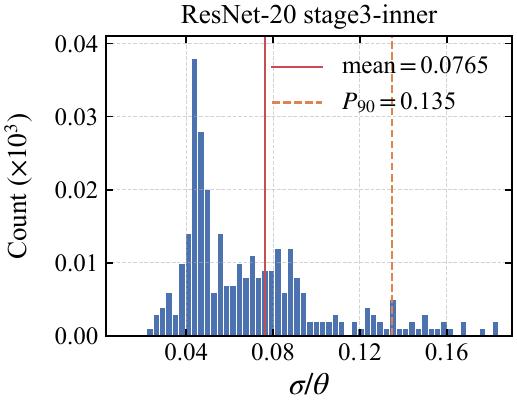}    
  \end{minipage}
  \begin{minipage}{0.48\columnwidth}
    \centering
    \includegraphics[width=\linewidth]{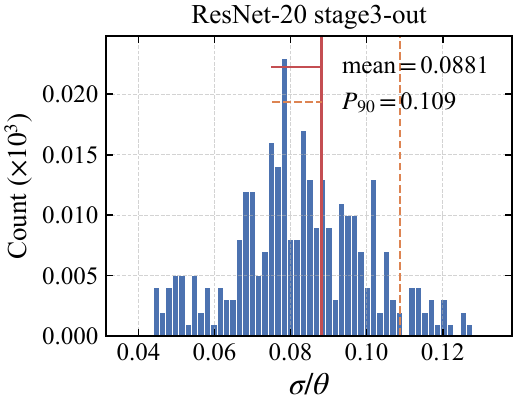}    
  \end{minipage}
  
  \caption{Distribution of $\sigma/\theta$ in SNN layers in VGG16, ResNet18 and ResNet-20. Frequency of $\sigma/\theta<0.125$ amounts to $84.67\%$ in VGG-16, which indicates that at least $84.67\%$ of neurons satisfy the conditions of Theorem \ref{thm:theorem_1}. Similar observations can be obtained for other models.}
  \label{fig:vgg16_cifar10}
\end{figure}

To guarantee a sufficiently large sample count $n$ per class, we introduce a hyperparameter $u_{\mathrm{in}}$ for cross-batch mean statistics when the batch size is small or the number of classes is large. The input mean is recalculated once every $u_{\mathrm{in}}$ batches; alternatively, an iterative update rule with historical mean records can be adopted to approximate the mean over large sample sets. For simplicity, we can constrain the number of samples from each class within a batch. For instance, given a batch size of 64 and 8 samples per class, each batch contains 8 distinct classes with 8 samples per class.

\subsection{Computational Complexity Comparison}
\label{sec:complexity_analysis}

 The comparison of the actual computational cost on each model compared with the original model without the SCR-Conv2d layer is presented in Table \ref{tab:li_computational_cost}. This result demonstrates that cost of model with SCR is nearly the same as base model, and that the computational overhead introduced by SCR-Conv2d is negligible compared with standard Conv2d. The theoretical analysis of computational cost is detailed in Appendix.

\begin{table}[!t]
\centering
\caption{Computational cost comparison.}
\label{tab:li_computational_cost}
\begin{tabular}{lllllc}
\toprule
Model & Base (M) & with CR (M) & $\Delta$C (M) & $\Delta$ (\%) & $\Delta$P. (K) \\
\midrule
VGG-16    & 666.50 & 671.36 & +4.86 & +0.73 & +24.2 \\
ResNet-18 & 1115.78 & 1120.50 & +4.72 & +0.42 & +4.0 \\
ResNet-20 & 83.24  & 84.12  & +0.88 & +1.06 & +0.4 \\
ResNet-34 & 2327.02& 2334.10& +7.08 & +0.30 & +6.9 \\
\bottomrule
\end{tabular}
\end{table}

\section{Conclusion}
In this work, we present SRMP, a general initial membrane potential tuning strategy for alleviating layer‑wise accumulated errors in ANN‑to‑SNN conversion. SRMP is valid for inputs with approximately normal distributions and yields considerable accuracy improvements. Spike competitive Refinement is adopted to strengthen model stability. As future work, reinforcement learning can be exploited to learn adaptive adjustment policies for SNN internal parameters. By extracting input‑aware latent features from spike‑rate states within SNNs, the learned policy can produce dynamic parameter adjustments.

\bibliographystyle{IEEEtran}  
\bibliography{ref}     

 




\newpage
\section*{Appendix}
\label{sec:Appendix}

\subsection{Proof of the Theorems}
\label{sec:proof_1_2}

\noindent \textbf{Theorem 1.}

(1) For an interval of fixed length defined over a normal distribution $\Phi$, the integral over this interval attains its maximum value if and only if the mean of the normal distribution coincides with the midpoint of the interval.

(2) Let the set of intervals be 
\begin{equation}
U = \biggl\{\biggl[0, \frac{\theta}{L}\biggr], \biggl[\frac{\theta}{L}, \frac{2\theta}{L}\biggr], \dots, \biggl[\frac{(L-1)\theta}{L}, \theta\biggr]\biggr\}. 
\end{equation}
Denote $U_T$ as the special case where exactly one interval satisfies the extremum condition stated in (1). 
The corresponding RMP distribution $F_{U_T}(x)$ is symmetric, and its gradient equals zero at both the midpoint and boundaries of each subinterval in $U_T$.

(3) When $L=4$ and $\frac{\theta}{L} > 2\sigma$, the boundary probability density of the distribution defined in (2) reaches its minimum. Formally, for any given $\epsilon>0$, define the boundary region $B=[0,\epsilon] \cup [\frac{\theta}{L}-\epsilon, \frac{\theta}{L}]$. The following relation holds:
\begin{equation}
\inf_{U}\int_{B} f_U(x) dx = \int_{B} f_{U_T}(x) dx,
\end{equation}
where $f_U(x)$ stands for the probability density function formed by superimposing normal distributions truncated on each subinterval of $U$. This indicates that $U_T$ yields the locally minimal probability density near interval boundaries.

In statements (1) and (2), we adopt the ideal assumption that the distribution strictly follows a complete normal distribution $\Phi=\mathcal{N}(\mu, \sigma^2)$. To better fit real input characteristics in statement (3), we approximate the distribution with symmetric truncated normal distributions, where the normalization constant $c$ satisfies $c \to 1$.

\begin{IEEEproof}
\label{thm:proof_1}

(1) Let the fixed interval length be $l$. Define the interval centered exactly at the distribution mean $\mu$ as $U=[\mu - \frac{l}{2}, \mu + \frac{l}{2}]$. Take an arbitrary shifted interval $U^\prime=[\mu - \frac{l}{2} + \delta, \mu + \frac{l}{2} + \delta]$ where the offset $\delta \neq 0$. For any $u \in U \setminus U^\prime$ and $v \in U^\prime \setminus U$, we have $\phi_{\mathcal{N}}(u) > \phi_{\mathcal{N}}(v)$, where $\phi_{\mathcal{N}}(\cdot)$ denotes the probability density function of the normal distribution.
The integral difference is formulated as:
\begin{equation}
\begin{split}
&\int_{U} \phi_{\mathcal{N}}(x) dx - \int_{U'} \phi_{\mathcal{N}}(x) dx\\
&= \int_{U \setminus U'} \phi_{\mathcal{N}}(x) dx - \int_{U' \setminus U} \phi_{\mathcal{N}}(x) dx > 0,
\end{split}
\end{equation}
which implies that the integral over the interval attains its maximum when the interval midpoint coincides with the distribution mean.

(2) Based on the prerequisite, the full range of residual membrane potential $[0, \theta]$ is evenly partitioned into $L$ subintervals:
\begin{equation}
\left[0, \frac{\theta}{L}\right], \left[\frac{\theta}{L}, \frac{2\theta}{L}\right], \dots, \left[\frac{(L-1)\theta}{L}, \theta\right].
\end{equation}
The target subinterval $I=\left[\mu - \frac{\theta}{2L}, \mu + \frac{\theta}{2L}\right]$ satisfies the extremum condition derived in (1), which guarantees all subintervals are symmetric about $x=\mu$. Owing to the symmetric property of normal distributions, the probability density function $f_U(x)$ of the residual membrane potential satisfies $f_U(\mu + x) = f_U(\mu - x)$, so the overall distribution $F_{U_T}(x)$ is symmetric.

At the central point $x=\mu$, the gradient of the distribution satisfies
\begin{equation}
\left.\frac{df_U(x)}{dx}\right|_{x=\mu} = \lim_{\Delta x \to 0} \frac{f_U(\mu+\Delta x) - f_U(\mu-\Delta x)}{\Delta x} = 0.
\end{equation}
At the boundaries $x=0$ and $x=\theta$, the membrane potential is truncated and confined within $0<u_t < \theta$, hence
\begin{equation}
\left.\frac{df_U(x)}{dx}\right|_{x=0} = \left.\frac{df_U(x)}{dx}\right|_{x=\theta} = 0.
\end{equation}
The derivation of statement (2) reveals the high similarity between the distribution of $U_T$ and the standard normal distribution.

(3) For simplified calculation, we set the normalization constant $c=1$.
For silent neurons that almost never fire spikes and saturated neurons that consistently emit $L=4$ spikes, their RMPDs correspond to shifted normal distributions $\mathcal{N}(\mu - k\theta, \sigma^2)$ with $k=0$ and $k=4$, respectively. Such neurons are excluded from subsequent analysis since their distributions are already purely Gaussian.
For general neurons, let $k$ denote the dominant spike count, and we approximate that only spike counts $\{k-1,k,k+1\}$ contribute noticeably to the distribution while other spike counts are negligible.

The full input distribution is thus covered by at most three subintervals:
\begin{equation}
\bigl[\tfrac{(k-2)\theta}{L}, \tfrac{(k-1)\theta}{L}\bigr],\;
\bigl[\tfrac{(k-1)\theta}{L}, \tfrac{k\theta}{L}\bigr],\;
\bigl[\tfrac{(k+1)\theta}{L}, \tfrac{(k+2)\theta}{L}\bigr],
\end{equation}
with the distribution mean satisfying $\mu \in \bigl[\tfrac{k\theta}{L}, \tfrac{(k+1)\theta}{L}\bigr]$.

The cumulative probability of residual membrane potential over the boundary region $B$ is calculated as
\begin{equation}
\begin{split}
\int_B f_U(x) dx 
&= \int_{0}^{\varepsilon} f_U(x) dx + \int_{\frac{\theta}{L} - \varepsilon}^{\frac{\theta}{L}} f_U(x) dx \\
&\approx \varepsilon \left( f_U(0) + f_U\left( \frac{\theta}{L} \right) \right) \\
&= 2\varepsilon \left( \phi\left( \frac{(k-1)\theta}{L} \right) + \phi\left( \frac{k\theta}{L} \right) \right).
\end{split}
\end{equation}

A well-known property states that the normal density function is concave outside the symmetric interval $[\mu - \epsilon, \mu + \epsilon]$. Under the condition $\frac{\theta}{L} > 2\sigma$, for any $\Delta x \in \left[0, \frac{\theta}{2L} - \sigma\right]$, we have
\begin{equation}
\begin{split}
&\phi\left(\mu - \frac{\theta}{2L} - \Delta x\right) + \phi\left(\mu - \frac{\theta}{2L} + \Delta x\right)\\
&\geq 2\phi\left(\mu - \frac{\theta}{2L}\right).
\end{split}
\end{equation}

For the interval configuration $U_T$ defined in (2), we have $\frac{(k-1)\theta}{L} = \mu - \frac{\theta}{2L}$ and $\frac{k\theta}{L} = \mu + \frac{\theta}{2L}$. Substitute the equalities into the density sum:
\begin{equation}
\begin{split}
&\phi\bigl(\tfrac{(k-1)\theta}{L}\bigr) + \phi\bigl(\tfrac{k\theta}{L}\bigr)\\
&= 2\phi\Bigl(\mu - \tfrac{\theta}{2L}\Bigr) \\
&\leq \phi\Bigl(\mu - \tfrac{\theta}{2L} - \Delta x\Bigr) + \phi\Bigl(\mu - \tfrac{\theta}{2L} + \Delta x\Bigr) \\
&= \phi\Bigl(\mu - \tfrac{\theta}{2L} - \Delta x\Bigr) + \phi\Bigl(\mu + \tfrac{\theta}{2L} - \Delta x\Bigr) \\
&= \phi\bigl(\tfrac{(k-1)\theta}{L} - \Delta x\bigr) + \phi\bigl(\tfrac{k\theta}{L} - \Delta x\bigr).
\end{split}
\end{equation}

This inequality indicates that shifting the interval set $U_T$ leftward by $\Delta x$ increases the value of $\int_{B} f_U(x) dx$. The same conclusion holds for a rightward shift by $\Delta x$. Therefore, the boundary integral $\int_{B} f_U(x) dx$ attains its local minimum exactly when $U=U_T$.
\end{IEEEproof}

\noindent \textbf{Theorem 2.}

For inputs following an approximate normal distribution, the cumulative probability over the target interval $[m-\frac{\theta}{2L},m+\frac{\theta}{2L}]$ is negatively correlated with the input variance and negatively correlated with the interval offset $|\mu-m|$. Here, $\theta$ denotes the membrane firing threshold, $\mu$ is the mean of the normal input distribution, and $m$ represents the midpoint of the target interval.

\begin{IEEEproof}
\label{thm:proof_2}
Let the input follow a normal distribution $\Phi=\mathcal{N}(\mu, \sigma^2)$. For the target interval $[m-\frac{\theta}{2L},m+\frac{\theta}{2L}]$, the cumulative probability over this interval is formulated as:
\begin{equation}
\int_{m-\frac{\theta}{2L}}^{m+\frac{\theta}{2L}} \frac{1}{\sqrt{2\pi}\sigma} \exp\left(-\frac{(x-\mu)^2}{2\sigma^2}\right) dx.
\end{equation}
Perform the variable substitution $t = \frac{x-\mu}{\sigma}$. The integral is transformed into the standard Gaussian integral form:
\begin{equation}
\int_{\frac{m-\mu-\frac{\theta}{2L}}{\sigma}}^{\frac{m-\mu+\frac{\theta}{2L}}{\sigma}} \frac{1}{\sqrt{2\pi}} \exp\left(-\frac{t^2}{2}\right) dt.
\end{equation}
As $\sigma$ increases, the absolute values of the integral bounds shrink, which reduces the value of the integral. Conversely, the integral grows as $\sigma$ decreases. Hence, the cumulative probability is negatively correlated with the variance $\sigma^2$.

Now consider two arbitrary intervals $U_0$ and $U_0'$. Analogous to the proof of statement (1) in Theorem 1, if $|\mu-m| < |\mu'-m|$, the integral of the probability density over $U_0$ is larger, where $m$ denotes the symmetric center of the target interval. Without loss of generality, assume $(\mu-m)(\mu'-m)\geq0$, meaning $U_0$ and $U_0'$ lie on the same side of $m$ (if not, reflect the interval across $x=\mu$ to satisfy this condition). For any $u \in U_0 \setminus U_0^\prime$ and $v \in U_0^\prime \setminus U_0$, we have $\phi_{\mathcal{N}}(u) > \phi_{\mathcal{N}}(v)$.

The difference between the two integrals reads:
\begin{equation}
\begin{split}
&\int_{U_0} \phi_{\mathcal{N}}(x)\,\mathrm{d}x - \int_{U_0^\prime} \phi_{\mathcal{N}}(x)\,\mathrm{d}x\\
&= \int_{U_0 \setminus U_0^\prime} \phi_{\mathcal{N}}(x)\,\mathrm{d}x - \int_{U_0^\prime \setminus U_0} \phi_{\mathcal{N}}(x)\,\mathrm{d}x > 0.
\end{split}
\end{equation}

This result demonstrates that the cumulative probability decreases monotonically as $|\mu-m|$ rises. Therefore, the cumulative probability is negatively correlated with the interval offset magnitude $|\mu-m|$.
\end{IEEEproof}

\subsection{Selection of Regularization Weights}
When applying SRMP fine-tuning for quantized conversion, we assign higher fine-tuning priority to classes with large input variance, and define the regularization weight as $\lambda_n = f(p)$ with $f' < 0$, where $p$ denotes the cumulative distribution density over the target interval.

However, for the general ANN-SNN conversion scheme discussed in Section V-A3, we place greater emphasis on the mean squared quantization error between quantized values and original continuous activations. Even inputs with small variance can incur substantial conversion error in this setting. We adopt piecewise mean squared error (MSE) defined as $E[(x-x_q)^2]$, where the quantized output is $x_q=\lfloor\frac{x+0.5}{\theta}\rfloor\cdot\theta$. The value $x_q$ remains constant within each partitioned interval; we omit the time-step hyperparameter $T$ for concise derivation. We analyze the two dominant practical scenarios: input concentrated within a single partition interval, and input distributed across two adjacent partition intervals (these two cases cover the vast majority of real-world inputs).

1. Single dominant interval: This corresponds to small interval offset. In this case, $\lfloor\frac{x+0.5}{\theta}\rfloor$ yields a unique integer value denoted as $n$. The MSE decomposition reads:
\begin{equation}
    E[(x-x_q)^2]=E[(x-\bar{x})^2]+E[(\bar{x}-x_q)^2]=\sigma^2+d^2,
\end{equation}
where $\sigma^2$ is the input variance and $d$ represents the interval offset distance. It follows that $E[(x-x_q)^2]$ is positively correlated with both $\sigma$ and $d$.

2. Two dominant intervals: This corresponds to large interval offset. By translation invariance, we simplify the analysis by setting the two adjacent intervals as $U=[-\theta, 0]$ and $V=[0,\theta]$, with positive mean offset $d=\bar{x}>0$. The MSE derivation expands as:
\begin{equation}
\begin{split}
    &E[(x-x_q)^2]\\
    &=E_{U}\left[\left(x+\frac{\theta}{2}\right)^2\right]+E_{V}\left[\left(x-\frac{\theta}{2}\right)^2\right]\\
    &=E[x^2]+\big[E_U(x)-E_V(x)\big]\cdot\theta \\
    &=E[x^2]-E(|x|)\cdot\theta \\
    &\geq D(x)+\bar{x}^2-\Big[E(|x-\bar{x}|)+\Phi(0<x<2\bar{x})\cdot\bar{x}\Big]\cdot\theta \\
    &=\sigma^2+\bar{x}^2-\left[\sqrt{\dfrac{2}{\pi}}\cdot \sigma+\Phi(0<x<2\bar{x})\cdot\bar{x}\right]\cdot\theta.
\end{split}
\end{equation}

The indicator term $\Phi(0<x<2\bar{x})$ is negatively correlated with $\sigma$. Under large offset conditions, $\Phi(0<x<2\bar{x})$ becomes sufficiently small and can be neglected. Taking the derivative with respect to $\sigma$ shows that for $\sigma<\sqrt{\frac{1}{2\pi}}\cdot\theta$ (a condition satisfied by nearly all input distributions), $E[(x-x_q)^2]$ is negatively correlated with $\sigma$.

In summary, for large interval offsets, inputs with smaller variance yield larger quantization errors — this conclusion contradicts the observation derived under the QCFS framework. Accordingly, we require $\lambda$ to be negatively correlated with input variance $\sigma$ and positively correlated with offset $d$. Drawing on Theorem~2, we can select the weight formulation $\lambda=f(d^k\cdot p)$ with $f'>0$, where a properly chosen constant $k$ ensures $\lambda$ is positively correlated with $d$.

\subsection{Theoretical analysis of Computational cost}
It is worth noting that the nonlinear activation inside SCR-Conv2d takes at most $T$ distinct values. In practical convolution computation, this operation is equivalent to performing convolution separately for $T$ groups of spike sequences with different magnitudes before aggregating all outputs. Accordingly, the number of floating-point multiplications introduced by SCR-Conv2d is at most $L$ times that of an identical network fed with uniform spike sequences. We compare the multiplication and addition FLOPs of standard Conv2d and the proposed SCR-Conv2d within the same layer as follows.

\textbf{Standard Conv2d}:

Suppose the input tensor shape is $[N, C_{\mathrm{in}}, H, W]$, the output tensor shape is $[N, C_{\mathrm{out}}, H, W]$, and the convolution kernel shape is $[C_{\mathrm{out}}, C_{\mathrm{in}}, k, k]$. For simplified analysis, we assume both layers receive spike outputs from the preceding layer with identical firing rate $f_{\mathrm{s}}$. The total number of addition FLOPs for Conv2d is
\begin{equation}
\text{FLOP}_{\mathrm{add}} = N \cdot H \cdot W \cdot C_{\mathrm{out}} \cdot (k^2 \cdot C_{\mathrm{in}}) \cdot f_{\mathrm{s}},
\end{equation}
and the total number of multiplication FLOPs is
\begin{equation}
\text{FLOP}_{\mathrm{mul}} = N \cdot C_{\mathrm{out}} \cdot C_{\mathrm{in}} \cdot k^2.
\end{equation}
The overall computational complexity is formulated as:
\begin{equation}
C = N \cdot C_{\mathrm{out}} \cdot C_{\mathrm{in}} \cdot k^2 \cdot \left( H \cdot W \cdot f_{\mathrm{s}} \cdot C_{\mathrm{add}} + C_{\mathrm{mul}} \right).
\end{equation}

\textbf{SCR-Conv2d}:

The activation tensor $\mathit{act}$ has shape $[N, C, H, W]$, and the weight tensor $W_{\mathrm{SCR}}$ follows the shape $[C, 1, k, k]$. The convolution implementation of SCR-Conv2d is realized via
$\texttt{F.conv2d(act, W, \text{groups}=x.\text{shape}[1])}$.

The total addition FLOPs of SCR-Conv2d are
\begin{equation}
\text{FLOP}_{\mathrm{add}}^{\mathrm{SCR}} = N \cdot H \cdot W \cdot C_{\mathrm{out}} \cdot k^2 \cdot f_{\mathrm{s}}.
\end{equation}

The upper bound of multiplication FLOPs is
\begin{equation}
\text{FLOP}_{\mathrm{mul}}^{\mathrm{SCR}} = N \cdot C_{\mathrm{out}} \cdot k^2.
\end{equation}
The overall computational complexity is given by
\begin{equation}
C_{\mathrm{SCR}} = N \cdot C_{\mathrm{out}} \cdot k^2 \cdot \left( H \cdot W \cdot f_{\mathrm{s}} \cdot C_{\mathrm{add}} + 5 \cdot C_{\mathrm{mul}} \right).
\end{equation}

We derive the complexity ratio as

\begin{equation}
\frac{C_{\mathrm{SCR}}}{C} = \frac{N C_{\mathrm{out}} k^2 \left( H W f_{\mathrm{s}} C_{\mathrm{add}} + 5 C_{\mathrm{mul}} \right)}{N C_{\mathrm{out}} C_{\mathrm{in}} k^2 \left( H W f_{\mathrm{s}} C_{\mathrm{add}} + C_{\mathrm{mul}} \right)} \approx \frac{1}{C_{\mathrm{in}}}.
\label{eq:li_ratio}
\end{equation}

This result demonstrates that the computational overhead introduced by SCR-Conv2d is negligible compared with standard Conv2d. Furthermore, competitive refinement improves spike sparsity across neurons, such that the practical running cost of the model equipped with SCR-Conv2d is roughly equivalent to the baseline network.

\vfill

\end{document}